\def\norm#1{\mathopen\| #1 \mathclose\|}
\newcommand{\poly}{\mathop{\mbox{\rm poly}}}
\newcommand{\ignore}[1]{}
\newcommand\Lcal{\mathcal{L}}
\def\bold0{\mathbf{0}}
\def\eps{\varepsilon}
\def\epsilon{\varepsilon}
\newcommand{\defeq}{\stackrel{\text{def}}{=}}
\newcommand{\pa}[1]{\left(#1\right)}
\newcommand{\ang}[1]{\left<#1\right>}
\newcommand{\bra}[1]{\left[#1\right]}
\newcommand{\abs}[1]{\left|#1\right|}
\DeclareMathOperator{\argmax}{argmax}
\newtheorem{theorem}{Theorem}[section]
\newtheorem{corollary}[theorem]{Corollary}
\newtheorem{lemma}[theorem]{Lemma}
\theoremstyle{definition}
\DeclareMathOperator*\E{\mathbb{E}}
\newcommand\R{\mathbb{R}}
\newcommand\Sph{\mathbb{S}}
\newcommand\C{\mathbb{C}}
\newcommand\ome{\mathbf{\omega}}
\newcommand\Y{\mathbf{Y}}
\newcommand\D{\mathcal{D}}
\newcommand\X{\mathcal{X}}
\newcommand\sig{\mathcal{\sigma}}
\newcommand\x{\mathbf{x}}
\newcommand\y{\mathbf{y}}
\newcommand\al{\boldsymbol{\alpha}}
\def\Pbb{\mathbb{P}}
\newcommand*\diff{\mathop{}\!\mathrm{d}}
\newenvironment{customthm}[1]
  {\innercustomthm}
  {\endinnercustomthm}
\title{Not-So-Random Features}
\author{Brian Bullins \qquad Cyril Zhang \qquad Yi Zhang \\
\\
Department of Computer Science\\
Princeton University\\
Princeton, NJ 08540 \\
\texttt{\{bbullins, cyril.zhang, y.zhang\}@cs.princeton.edu} \\
}
\begin{document}

\maketitle

\begin{abstract}
We propose a principled method for kernel learning, which relies on a Fourier-analytic characterization of translation-invariant or rotation-invariant kernels. Our method produces a sequence of feature maps, iteratively refining the SVM margin. We provide rigorous guarantees for optimality and generalization, interpreting our algorithm as online equilibrium-finding dynamics in a certain two-player min-max game. Evaluations on synthetic and real-world datasets demonstrate scalability and consistent improvements over related random features-based methods.
\end{abstract}

\section{Introduction}

Choosing the right kernel is a classic question that has riddled machine learning practitioners and theorists alike. Conventional wisdom instructs the user to select a kernel which captures the structure and geometric invariances in the data. Efforts to formulate this principle have inspired vibrant areas of study, going by names from \emph{feature selection} to \emph{multiple kernel learning} (MKL).

We present a new, principled approach for selecting a \emph{translation-invariant} or \emph{rotation-invariant} kernel to maximize the SVM classification margin. We first describe a kernel-alignment subroutine, which finds a peak in the Fourier transform of an \emph{adversarially} chosen data-dependent measure. Then, we define an iterative procedure that produces a sequence of feature maps, progressively improving the margin. The resulting algorithm is strikingly simple and scalable.

Intriguingly, our analysis interprets the main algorithm as no-regret learning dynamics in a zero-sum min-max game, whose value is the classification margin. Thus, we are able to quantify convergence guarantees towards the largest margin realizable by a kernel with the assumed invariance. Finally, we exhibit experiments on synthetic and benchmark datasets, demonstrating consistent improvements over related random features-based kernel methods.

\subsection{Related Work}
There is a vast literature on MKL, from which we use the key concept of \emph{kernel alignment} \citep{cristianini2001kernel}. Otherwise, our work bears few similarities to traditional MKL; this and much related work (e.g. \cite{cortes2012algorithms, gonen2011multiple, lanckriet2004learning}) are concerned with selecting a kernel by combining a collection of base kernels, chosen beforehand. Our method allows for greater expressivity and even \emph{better} generalization guarantees.

Instead, we take inspiration from the method of \emph{random features}~\citep{rahimi2007random}. In this pioneering work, originally motivated by scalability, feature maps are sampled according to the Fourier transform of a chosen kernel.
The idea of optimizing a kernel in random feature space was studied by~\cite{sinha2016learning}. In this work, which is most similar to ours, kernel alignment is optimized via importance sampling on a fixed, finitely supported proposal measure.
However, the proposal can fail to contain informative features, especially in high dimension; indeed, they highlight efficiency, rather than showing performance improvements over RBF features.

Learning a kernel in the Fourier domain (without the primal feature maps) has also been considered previously: \cite{oliva2015bayesian} and \cite{yang2015carte} model the Fourier spectrum parametrically, which limits expressivity; the former also require complicated posterior inference procedures. \cite{buazuavan2012fourier} study learning a kernel in the Fourier domain jointly with regression parameters. They show experimentally that this locates informative frequencies in the data, without theoretical guarantees. Our visualizations suggest this approach can get stuck in poor local minima, even in 2 dimensions.

\cite{crammer2003kernel} also use boosting to build a kernel sequentially; however, they only consider a basis of linear feature maps, and require costly generalized eigenvector computations.
From a statistical view, \cite{fukumizu2009kernel} bound the SVM margin in terms of maximum mean discrepancy, which is equivalent to (unweighted) kernel alignment. Notably, their bound can be loose if the number of support vectors is small; in such situations, our theory provides a tighter characterization. Moreover, our attention to the margin goes beyond the usual objective of kernel alignment.

\subsection{Our Contributions}
We present an algorithm that outputs a sequence of Fourier features, converging to the maximum realizable SVM classification margin on a labeled dataset. At each iteration, a pair of features is produced, which maximizes kernel alignment with a changing, adversarially chosen measure. As this measure changes slowly, the algorithm builds a diverse and informative feature representation.

Our main theorem can be seen as a case of von Neumann's min-max theorem for a zero-sum concave-linear game; indeed, our method bears a deep connection to boosting~\citep{freund1996game,schapire1999brief}.
In particular, both the theory and empirical evidence suggest that the generalization error of our method \emph{decreases} as the number of random features increases. In traditional MKL methods, generalization bounds worsen as base kernels are added.

Other methods in the framework of Fourier random features take the approach of approximating a kernel by sampling feature maps from a continuous distribution. In contrast, our method constructs a measure with small finite support, and realizes the kernel \emph{exactly} by enumerating the associated finite-dimensional feature map; there is no randomness in the features.\footnote{We note a superficial resemblance to quasi-Monte Carlo methods~\citep{avron2016quasi}; however, these are concerned with accelerating convergence rates rather than learning a kernel.}

\section{Preliminaries}

\subsection{The Fourier Dual Measure of a Kernel}
\label{subsection:bochner}
We focus on two natural families of kernels $k(\x, \x')$: \emph{translation-invariant} kernels on $\X = \R^d$, which depend only on $\x - \x'$, and \emph{rotation-invariant} kernels on the hypersphere $\X = \Sph^{d-1}$, which depend only on $\ang{\x, \x'}$. These invariance assumptions subsume most widely-used classes of kernels; notably, the Gaussian (RBF) kernel satisfies both. For the former invariance, Bochner's theorem provides a Fourier-analytic characterization:

\begin{theorem}[e.g. Eq. (1), Sec. 1.4.3 in \cite{rudin2011fourier}]\label{thm:bochner}
A translation-invariant continuous function $k : \R^d \times \R^d \rightarrow \R$ is positive semidefinite if and only if $k(\x - \x')$ is the Fourier transform of a symmetric non-negative measure (where the Fourier domain is $\Omega = \R^d$). That is,
\begin{equation}
k(\x-\x') = \int_{\Omega} \lambda(\mathbb{\omega}) \, e^{i \ang{ \ome, \x-\x' }}\diff\ome
= \int_{\Omega} \lambda(\mathbb{\omega}) \, e^{i \ang{ \ome, \x }}  \, \overline{ e^{i \ang{ \ome, \x' }} }\diff\ome
,
\end{equation}
for some $\lambda \in L_1(\Omega)$, satisfying $\lambda(\ome) \geq 0$ and $\lambda(\ome)=\lambda(-\ome)$ for all $\ome \in \Omega$.
\end{theorem}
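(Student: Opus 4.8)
The two directions are handled separately; sufficiency is a one-line computation, whereas necessity requires mollifying the positive-definiteness condition in order to extract a non-negative spectrum.

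For the ``if'' direction, suppose $k(\x-\x')=\int_\Omega \lambda(\ome)\,e^{i\ang{\ome,\x-\x'}}\diff\ome$ with $\lambda\in L_1(\Omega)$, $\lambda\ge 0$, and $\lambda(\ome)=\lambda(-\ome)$. For arbitrary $\x_1,\dots,\x_n\in\R^d$ and $c_1,\dots,c_n\in\C$, I would substitute this representation into the quadratic form and exchange the finite sum with the integral (justified by $\lambda\in L_1$), giving
\begin{equation}
\sum_{j,l} c_j\,\overline{c_l}\,k(\x_j-\x_l) \;=\; \int_\Omega \lambda(\ome)\,\Big|\tsum_{j} c_j\,e^{i\ang{\ome,\x_j}}\Big|^2\diff\ome \;\ge\; 0 ,
\end{equation}
so $k$ is positive semidefinite; the symmetry $\lambda(\ome)=\lambda(-\ome)$ makes the integral real-valued, and $\lambda\in L_1$ yields continuity of $k$ by dominated convergence.

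For the ``only if'' direction, the plan is first to record the elementary consequences of positive-definiteness --- the $2\times 2$ principal minor gives $|k(\bu)|\le k(0)$ for all $\bu$, and, combined with $k$ being real, $k(-\bu)=\overline{k(\bu)}=k(\bu)$ --- and then, using boundedness and continuity of $k$, to upgrade the discrete condition to its integral form by a Riemann-sum approximation: $\int\!\!\int k(\x-\y)\,f(\x)\,\overline{f(\y)}\,\diff\x\,\diff\y\ge 0$ for every Schwartz $f$. The crucial step is to feed in $f(\x)=e^{-i\ang{\ome_0,\x}}\,\phi_\sigma(\x)$ with the Gaussian $\phi_\sigma(\x)=e^{-\sigma\|\x\|^2/2}$: after the substitution $\bu=\x-\y$, the remaining $\y$-integral is the autocorrelation of a Gaussian, hence $C_\sigma\,e^{-\sigma\|\bu\|^2/4}$ for an explicit constant $C_\sigma>0$, so that
\begin{equation}
0 \;\le\; C_\sigma\int_{\R^d} k(\bu)\,e^{-i\ang{\ome_0,\bu}}\,e^{-\sigma\|\bu\|^2/4}\,\diff\bu \;=:\; (2\pi)^d\,C_\sigma\,\lambda_\sigma(\ome_0) ,
\end{equation}
which exhibits $\lambda_\sigma\ge 0$ pointwise, for every $\sigma>0$ and every $\ome_0$.

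It then remains to let $\sigma\to 0^+$. A Gaussian-regularized Fubini computation shows each $\lambda_\sigma\,\diff\ome$ is a non-negative measure of total mass $k(0)$, so the family is uniformly bounded; extracting a weak-$*$ convergent subsequence (Banach--Alaoglu, with a tightness check so that one may test against $e^{i\ang{\cdot,\bu}}$) produces a finite non-negative measure $\mu$ with $\widehat\mu=k$ --- the symmetric non-negative measure of the statement, its symmetry inherited from the realness and evenness of $k$. When $\widehat k\in L_1(\Omega)$, which holds under mild decay of $k$ and in particular for the kernels considered here, dominated convergence instead gives $\lambda_\sigma\to\lambda:=\widehat k$ pointwise, so that $\lambda\ge 0$ is symmetric, lies in $L_1$, and Fourier inversion recovers the displayed integral. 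I expect the one genuinely delicate point to be precisely this last passage: bare continuity of a positive semidefinite $k$ guarantees only a measure, and obtaining a density $\lambda\in L_1(\Omega)$ needs the extra integrability of $\widehat k$; the remaining steps are routine, and the full measure-theoretic argument is given in \cite{rudin2011fourier}.
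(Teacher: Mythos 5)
The paper does not prove this theorem; it is stated as a classical result and cited directly to Rudin (Eq.~(1), Sec.~1.4.3), so there is no in-paper argument to compare against. Your sketch is, however, a correct and entirely standard route to Bochner's theorem: the ``if'' direction by direct substitution into the quadratic form, and the ``only if'' direction by passing from the discrete positive-definiteness condition to its integral form, testing against Gaussian-modulated exponentials $f(\x)=e^{-i\ang{\ome_0,\x}}\phi_\sigma(\x)$ to extract a pointwise non-negative regularized spectrum $\lambda_\sigma\ge 0$, and then letting $\sigma\to 0^+$ via a weak-$*$ compactness argument. The autocorrelation computation, the total-mass-$k(0)$ normalization of $\lambda_\sigma\,\diff\ome$, and the tightness check before invoking Banach--Alaoglu are all the right ingredients.

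You are also right to single out the $L_1$-density step as the one genuinely delicate point, and it is worth saying plainly that this is not merely delicate but a gap in the theorem \emph{as stated in the paper}. Bochner's theorem for a bounded continuous positive-definite $k$ produces a finite non-negative Borel measure, not necessarily one with a density: the kernel $k(u)=\cos(u)$ on $\R$ has spectral measure $\tfrac{1}{2}(\delta_1+\delta_{-1})$, which lies outside $L_1(\R)$. So the ``only if'' direction with $\lambda\in L_1(\Omega)$ needs the extra hypothesis you name (e.g.\ $k\in L_1(\R^d)$, forcing $\widehat k\in L_1$ by Bochner plus Riemann--Lebesgue / inversion), and without it one should retreat to the measure-valued conclusion. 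The paper's downstream use of the theorem --- treating $\lambda_k$ as a density and normalizing $\norm{\lambda_k}_1=1$ --- is consistent with that implicit absolute-continuity assumption, but your reading of the literal statement is the more careful one. Two small notational points: the appearance of $i$ inside the norm bounds in your ``only if'' gradient-style computations should be the imaginary unit $\iota$ per the paper's convention, and in the final passage the object you call $\mu$ and the density $\lambda$ should be kept typographically distinct from the signed data measure $\mu=\Pbb-\Qbb$ that the paper reserves that symbol for.
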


A similar characterization is available for rotation-invariant kernels, where the Fourier basis functions are the \emph{spherical harmonics}, a countably infinite family of complex polynomials which form an orthonormal basis for square-integrable functions $\X \rightarrow \C$. To unify notation, let $\Omega \subset \mathbb{N} \times \mathbb{Z}$ be the set of valid index pairs $\omega = (\ell, m)$, and let $\omega \mapsto -\omega$ denote a certain involution on $\Omega$; we supply details and references in Appendix~\ref{appendix-section:gegenproof}.
\begin{theorem}\label{thm:inner_product_sph_harm} A rotation-invariant continuous function $k : \Sph^{d-1} \times \Sph^{d-1} \rightarrow \R$ is positive semidefinite if and only if it has a symmetric non-negative expansion into spherical harmonics $Y_{\ell,m}^d$, i.e.
\begin{align}\label{eq:inner_product_expansion}
k(\ang{ \x, \x' })=\sum_{j=0}^\infty\sum_{l=1}^{N(d, \ell)} \lambda(\ell, m) \, Y_{\ell,m}^d(\x) \, \overline{Y_{\ell,m}^d(\x')},
\end{align}
for some $\lambda \in L_1( \Omega )$,
with $\lambda(\ome) \geq 0$ and $\lambda(\ome)=\lambda(-\ome)$ for all valid index pairs $\ome = (\ell,m) \in \Omega$.
\end{theorem}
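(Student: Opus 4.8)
The plan is to transport the proof of Bochner's theorem (Theorem~\ref{thm:bochner}) to the sphere, replacing the characters $e^{i\ang{\ome,\x}}$ by the spherical harmonics $Y_{\ell,m}^d$ and Fourier inversion by the addition and Funk--Hecke theorems; the resulting statement is the classical Schoenberg-type characterization of positive-definite zonal functions on $\Sph^{d-1}$. I treat the two implications in turn.

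\textbf{Sufficiency ($\Leftarrow$).} Here $k$ is continuous and, by hypothesis, equals the series in \eqref{eq:inner_product_expansion} with $\lambda\ge0$ and $\lambda\in L_1(\Omega)$. For any $\x_1,\dots,\x_n\in\Sph^{d-1}$ and $c_1,\dots,c_n\in\C$ I interchange the double sum (absolutely convergent, since evaluating the series at $\x'=\x_i$ bounds the relevant quantities by $k$ on the diagonal) to get
\[
\sum_{i,j}c_i\overline{c_j}\,k(\ang{\x_i,\x_j}) \;=\; \sum_{\ell,m}\lambda(\ell,m)\,\Big|\textstyle\sum_{i} c_i\,Y_{\ell,m}^d(\x_i)\Big|^{2} \;\ge\;0,
\]
so every Gram matrix is positive semidefinite. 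Reality of $k$ uses the symmetry $\lambda(\ome)=\lambda(-\ome)$ together with the fact that the involution $\ome\mapsto-\ome$ implements complex conjugation of the harmonics.

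\textbf{Necessity ($\Rightarrow$).} Now let $k$ be continuous, rotation-invariant and positive semidefinite, and write $k(\x,\x')=f(\ang{\x,\x'})$ for a continuous $f$ on $[-1,1]$. First I expand the zonal kernel $(\x,\x')\mapsto f(\ang{\x,\x'})$ in the orthonormal basis $\{Y_{\ell,m}^d(\x)\,\overline{Y_{\ell',m'}^d(\x')}\}$ of $L^2(\Sph^{d-1}\times\Sph^{d-1})$; rotation invariance --- equivalently, Schur's lemma / the Funk--Hecke theorem applied to the $SO(d)$-action --- annihilates the off-diagonal terms and forces the diagonal coefficient to depend on $\ell$ only, so that $\lambda(\ell,m)=a_\ell$, the Gegenbauer coefficients of $f$. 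Next I upgrade ``every finite Gram matrix is positive semidefinite'' to ``the integral operator with kernel $f(\ang{\cdot,\cdot})$ is positive semidefinite'': for continuous $g$, the quadratic form $\int\!\!\int f(\ang{\x,\x'})\,\overline{g(\x)}\,g(\x')\,\diff\x\,\diff\x'$ is a limit of Riemann sums $\sum_{i,j}f(\ang{\x_i,\x_j})\,\overline{(g(\x_i)\sqrt{w_i})}\,(g(\x_j)\sqrt{w_j})$, each nonnegative by hypothesis, the limit being legitimate by continuity of $f,g$ on the compact product sphere. Taking $g=Y_{\ell,m}^d$ and using orthonormality collapses the left side to $a_\ell$, whence $a_\ell\ge0$ for all $\ell$. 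Finally, nonnegativity makes the partial sums of $\sum_\ell a_\ell\,N(d,\ell)/|\Sph^{d-1}|$ monotone, and comparing with the (uniformly convergent) Abel means of the expansion evaluated at $\ang{\x,\x}=1$ identifies this sum with $f(1)<\infty$; hence $\lambda\in L_1(\Omega)$, and $\lambda(\ome)=\lambda(-\ome)$ is automatic because $a_\ell$ ignores $m$ while the involution fixes $\ell$. (The precise identities are collected in Appendix~\ref{appendix-section:gegenproof}.)

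\textbf{The main obstacle} is the first step of necessity: justifying that rotation invariance diagonalizes the harmonic expansion of $f(\ang{\cdot,\cdot})$ with $m$-independent coefficients, and that the associated series converges in a strong enough sense to recover $f$ (for which the addition-theorem identity $\sum_m|Y_{\ell,m}^d(\x)|^2=N(d,\ell)/|\Sph^{d-1}|$ and a Fej\'er/Abel-summability argument are the right tools). I intend to quote these facts from the spherical-harmonic literature rather than reprove them; granting that, both directions reduce to the same sum-of-squares and Riemann-sum arguments that underlie Bochner's theorem.
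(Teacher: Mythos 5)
Your proof is correct in outline but takes a genuinely different route from the paper's. The paper's Appendix~\ref{appendix-section:gegenproof} disposes of both directions in two cited steps: it invokes Schoenberg's theorem (Theorem~\ref{thm:inner_product_legendre}, from \citep{schoenberg1942positive}), which says that $k$ is psd iff its Gegenbauer expansion $k(\ang{\x,\x'})=\sum_\ell a_\ell P_\ell^d(\ang{\x,\x'})$ has $a_\ell\geq 0$, and then substitutes the addition theorem $P_\ell^d(\ang{\x,\x'})=\frac{|\Sph^{d-1}|}{N(d,\ell)}\sum_{m} Y_{\ell,m}^d(\x)\,\overline{Y_{\ell,m}^d(\x')}$ to decouple the arguments, reading off $\lambda(\ell,m)=a_\ell|\Sph^{d-1}|/N(d,\ell)\geq 0$. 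You instead bypass Schoenberg entirely and reprove the positivity criterion directly in the spherical-harmonic basis: the sum-of-squares computation for sufficiency, and for necessity, the $L^2$ expansion on $\Sph^{d-1}\times\Sph^{d-1}$, Schur/Funk--Hecke to force the $m$-independent diagonal structure, Riemann sums to pass from finite Gram positivity to integral-operator positivity, testing against $Y_{\ell,m}^d$ to get $a_\ell\geq 0$, and Abel summability on the diagonal to get $\sum_\ell a_\ell N(d,\ell)/|\Sph^{d-1}|=f(1)<\infty$, hence $\lambda\in L_1$. Each approach has its merits: the paper's is two lines long and fully delegates the analytic content to the literature, while yours is self-contained and makes visible exactly where continuity, rotation invariance, and positive semidefiniteness each enter. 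The addition theorem still appears in your argument, but only as a tool in the diagonal-evaluation step rather than as the bridge between two cited theorems. The one step that deserves a little more care in a full write-up is the Abel-summability comparison at $\ang{\x,\x}=1$: you need uniform convergence of the spherical Poisson means to the continuous $f$ before the monotone-convergence identification is legitimate; this is classical and you correctly flag it as a fact to quote rather than reprove.
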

In each of these cases, we call this Fourier transform $\lambda_k(\ome)$ the \emph{dual measure} of $k$.
This measure decomposes $k$ into a non-negative combination of Fourier basis kernels.
Furthermore, this decomposition gives us a \emph{feature map} $\phi : \X \rightarrow L_2(\Omega, \lambda_k)$ whose image realizes the kernel under the codomain's inner product;\footnote{In fact, such a pair $(\phi, \lambda)$ exists for any psd kernel $k$; see~\cite{dai2014scalable} or \cite{devinatz1953integral}.} that is, for all $\x, \x' \in \X$,
\begin{equation}
\label{eq:integral}
k(\x, \x') = \int_\Omega \lambda_k(\ome) \, \phi_{\x}(\ome) \, \overline{ \phi_{\x'}(\ome) } \diff\ome.
\end{equation}
Respectively, these feature maps are $\phi_{\x}(\ome) = e^{i\ang{\ome, \x}}$ and $\phi_{\x}(\ell,m) = Y^d_{\ell,m} (\x)$. Although they are complex-valued, symmetry of $\lambda_k$ allows us to apply the transformation $\{\phi_{\x}(\ome), \phi_{\x}(-\ome)\} \mapsto \{ \mathrm{Re}\,\phi_{\x}(\ome), \mathrm{Im}\,\phi_{\x}(\ome) \}$ to yield real features, preserving the inner product. The analogous result holds for spherical harmonics.

\subsection{Kernel Alignment}
\label{subsection:kernel-alignment}
\newcommand{\Qbb}{\mathbb{Q}}
\newcommand{\G}{\mathbf{G}}
\newcommand{\Kcal}{\mathcal{K}}

In a binary classification task with $n$ training samples $(\x_i \in \X, y_i \in \{\pm 1\})$, a widely-used quantity for measuring the quality of a kernel $k : \X \times \X \rightarrow \R$ is its \emph{alignment}~\citep{cristianini2001kernel, cortes2012algorithms},\footnote{Definitions vary up to constants and normalization, depending on the use case.} defined by
\[ \gamma_k(\Pbb, \Qbb) \defeq
\! \sum_{i,j \in [n]} \! k(\x_i, \x_j)y_i y_j = \y^T\G_k\y, \]
where $\y$ is the vector of labels and $\G_k$ is the Gram matrix. Here, we let $\Pbb = \sum_{i : y_i = 1} \delta_{\x_i}$ and $\Qbb = \sum_{i : y_i = -1} \delta_{\x_i}$ denote the (unnormalized) empirical measures of each class, where $\delta_{\x}$ is the Dirac measure at $\x$. When $\Pbb, \Qbb$ are arbitrary measures on $\X$, this definition generalizes to
\[ \gamma_k(\Pbb, \Qbb) \defeq \iint_{\X^2} \! k(\x, \x') \diff\Pbb^2(\x,\x')
+ \iint_{\X^2} \! k(\x, \x') \diff\Qbb^2(\x,\x')
- \iint_{\X^2} \! k(\x, \x') \diff\Pbb(\x) \diff\Qbb(\x'). \]

In terms of the dual measure $\lambda_k(\ome)$, kernel alignment takes a useful alternate form, noted by \cite{sriperumbudur2010hilbert}. Let $\mu$ denote the signed measure $\Pbb - \Qbb$. Then, when $k$ is translation-invariant, we have
\begin{align}\label{eq:linear_alignment}
\gamma_k(\Pbb,\Qbb) = \int_{\Omega} \lambda_k(\ome) \, \left\lvert \int_{\X} e^{i \ang{\ome,\x}} \diff\mu(\x) \right\rvert^2\diff\ome \quad \defeq \int_{\Omega} \lambda_k(\ome) \, v(\ome).
\end{align}
Analogously, when $k$ is rotation-invariant, we have
\begin{align}\label{eq:sph_alignment}
\gamma_k(\Pbb,\Qbb) &= \sum_{(\ell,m) \in \Omega} \lambda_k(\ell,m) \, \left\lvert \int_{\X} Y_{\ell,m}^d(\x) \diff\mu(\x) \right\rvert^2 \quad \defeq \sum_{(\ell,m) \in \Omega} \kern-0.6em \lambda_k(\ell,m) \, v(\ell,m).
\end{align}
It can also be verified that $\norm{\lambda_k}_{L_1(\Omega)} = k(\x,\x)$, which is of course the same for all $\x \in \X$.
In each case, the alignment is \emph{linear} in $\lambda_k$.
We call $v(\omega)$ the \emph{Fourier potential}, which is the squared magnitude of the Fourier transform of the signed measure $\mu = \Pbb - \Qbb$.
This function is clearly bounded pointwise by $\pa{ \Pbb(\X) + \Qbb(\X) }^2$.

\section{Algorithms}


\subsection{Maximizing Kernel Alignment in the Fourier Domain}
\label{subsection:langevin}

First, we consider the problem of finding a kernel $k$ (subject to either invariance) that maximizes alignment $\gamma_k(\Pbb, \Qbb)$; we optimize the dual measure $\lambda_k(\ome)$.
Aside from the non-negativity and symmetry constraints from Theorems~\ref{thm:bochner} and \ref{thm:inner_product_sph_harm},
we constrain $\norm{\lambda_k}_1 = 1$, as this quantity appears as a normalization constant in our generalization bounds (see Theorem~\ref{thm:margin}). Maximizing $\gamma_k(\Pbb, \Qbb)$ in this constraint set, which we call $\Lcal \subset L_1(\Omega)$, takes the form of a linear program on an infinite-dimensional simplex.
Noting that $v(\omega) = v(-\omega) \geq 0$, $\gamma_k$ is maximized by placing a Dirac mass at any pair of opposite modes $\pm \ome^* \in \argmax_\omega v(\omega)$.

At first, $\Pbb$ and $\Qbb$ will be the empirical distributions of the classes, specified in Section~\ref{subsection:kernel-alignment}. However, as Algorithm~\ref{alg:svm} proceeds, it will reweight each data point $\x_i$ in the measures by $\al(i)$.
Explicitly, the \emph{reweighted Fourier potential} takes the form\footnote{Whenever $i$ is an index, we will denote the imaginary unit by $\iota$.}
\[ v_{\al}(\ome) \defeq \left\vert \sum_{i=1}^n y_i \al_i e^{\iota \ang{\ome, \x_i}} \right\vert^2 \text{\quad or \quad}
v_{\al}(\ell, m) \defeq \left\vert \sum_{i=1}^n y_i \al_i Y^d_{\ell,m}(\x_i) \right\vert^2. \]
Due to its non-convexity, maximizing $v_{\al}(\ome)$, which can be interpreted as finding a global Fourier peak in the data, is theoretically challenging. However, we find that it is easy to find such peaks in our experiments, even in hundreds of dimensions. This arises from the empirical phenomenon that realistic data tend to be band-limited, a cornerstone hypothesis in data compression. An $\ell_2$ constraint (or equivalently, $\ell_2$ regularization; see \cite{kakade2009complexity}) can be explicitly enforced to promote band-limitedness; we find that this is not necessary in practice.

When a gradient is available (in the translation-invariant case),
we use Langevin dynamics (Algorithm~\ref{alg:langevin}) to find the peaks of $v(\ome)$, which enjoys mild theoretical hitting-time guarantees (see Theorem~\ref{thm:percy}).
See Appendix~\ref{appendix-section:sph-sample} for a discussion of the (discrete) rotation-invariant case.

\begin{algorithm}
\caption{Langevin dynamics for kernel alignment}
\label{alg:langevin}
\begin{algorithmic}[1]
\STATE \emph{Input:} training samples $S=\{(\x_i,y_i)\}_{i=1}^n$, weights $\al \in \R^n$.
\STATE \emph{Parameters:} time horizon $\tau$, diffusion rate $\zeta$, temperature $\xi$.
\STATE Initialize $\ome_0$ arbitrarily.
\FOR{$t = 0, \ldots, \tau-1$}
\STATE Update $\ome_{t+1} \leftarrow \ome_t + \zeta \nabla v_{\al}(\ome_t) + \sqrt{\frac{2\xi}{\zeta}} z$,
where $z \sim \mathcal{N}(0, I)$.
\ENDFOR
\RETURN $\ome^* := \argmax_t v(\ome_t)$
\end{algorithmic}
\end{algorithm}

It is useful in practice to use parallel initialization, running $m$ concurrent copies of the diffusion process and returning the best single $\ome$ encountered. This admits a very efficient GPU implementation: the multi-point evaluations $v_{\al} ( \ome_{1..m} )$ and $\nabla v_{\al} ( \ome_{1..m} )$ can be computed from an $(m,d)$ by $(d,n)$ matrix product and pointwise trigonometric functions.
We find that Algorithm~\ref{alg:langevin} typically finds a reasonable peak within $\sim\kern-0.3em100$ steps.

\subsection{Learning the Margin-Maximizing Kernel for SVM}
\label{subsection:svm}

Support vector machines (SVMs) are perhaps the most ubiquitous use case of kernels in practice. To this end, we propose a method that \emph{boosts} Algorithm~\ref{alg:langevin}, building a kernel that maximizes the classification margin. Let $k$ be a kernel with dual $\lambda_k$, and $\Y \defeq \mathsf{diag}(\y)$. Write the dual $l_1$-SVM objective,\footnote{Of course, our method applies to $l_2$ SVMs, and has even stronger theoretical guarantees; see Section~\ref{subsection:nash}.}
parameterizing the kernel by $\lambda_k$:
\begin{align*}
F(\al, \lambda_k) &\defeq \mathbf{1}^T\al-\frac{1}{2}\al^T \Y \G_k \Y \al \\
&= \mathbf{1}^T\al - \frac{1}{2} \gamma_k(\al\Pbb, \al\Qbb)
\quad = \mathbf{1}^T\al - \frac{1}{2} \int_\Omega \lambda_k(\ome) \, v_{\al}(\ome) \, \diff\ome.
\end{align*}
Thus, for a fixed $\al$, $F$ is equivalent to kernel alignment, and can be minimized by Algorithm~\ref{alg:langevin}. However, the support vector weights $\al$ are of course \emph{not} fixed;
given a kernel $k$, $\al$ is chosen to maximize $F$, giving the (reciprocal) SVM margin.
In all, to find a kernel $k$ which maximizes the margin under an \emph{adversarial} choice of $\al$, one must consider a two-player zero-sum game:
\begin{align}
\label{eq:margin-game}
\min_{\lambda\in\mathcal{L}} \, \max_{ \al \in \Kcal } ~ F(\al, \lambda),
\end{align}
where $\Lcal$ is the same constraint set as in Section~\ref{subsection:langevin}, and $\Kcal$ is the usual dual feasible set $\{ 0 \preccurlyeq \al \preccurlyeq C, \y^T \al = 0 \}$ with box parameter $C$.

In this view, we make some key observations. First, Algorithm~\ref{alg:langevin} allows the min-player to play a pure-strategy \emph{best response} to the max-player. Furthermore, a mixed strategy $\bar\lambda$ for the min-player is simply a translation- or rotation-invariant kernel, realized by the feature map corresponding to its support. Finally, since the objective is linear in $\lambda$ and concave in $\al$, there exists a Nash equilibrium $(\lambda^*, \al^*)$ for this game, from which $\lambda^*$ gives the margin-maximizing kernel.

We can use no-regret learning dynamics to approximate this equilibrium. Algorithm~\ref{alg:svm} runs Algorithm~\ref{alg:langevin} for the min-player, and online gradient ascent \citep{zinkevich2003online} for the max-player. Intuitively (and as is visualized in our synthetic experiments), this process slowly morphs the landscape of $v_{\al}$ to emphasize the margin, causing Algorithm~\ref{alg:langevin} to find progressively more informative features. At the end, we simply concatenate these features;
contingent on the success of the kernel alignment steps, we have approximated the Nash equilibrium.

\begin{algorithm}
\caption{No-regret learning dynamics for SVM margin maximization}
\begin{algorithmic}[1]
\label{alg:svm}
\STATE \emph{Input:} training samples $S = \{(\x_i,y_i)\}_{i=1}^n$.
\STATE \emph{Parameters:} box constraint $C$, \# steps $T$, step sizes $\{\eta_t\}$; parameters for Algorithm~\ref{alg:langevin}.
\STATE Set $\phi_{\x}(\ome) = e^{i\ang{\ome,\x}}$ \emph{(translation-invariant)}, or $\phi_{\x}(\ell,m) = Y_{\ell,m}^d(\x)$ \emph{(rotation-invariant)}.
\STATE Initialize $\al = \text{Proj}_\mathcal{K}[\frac{C}{2} \cdot \mathbf{1}]$.
\FOR{$t=1,\ldots,T$}

\STATE Use Algorithm~\ref{alg:langevin} (or other $v_{\al}$ maximizer) on $S$ with weights $\al_t$, returning $\ome_t$.
\STATE Append two features $\{ \mathrm{Re}\,\phi_{\x'}(\ome_t), \mathrm{Im}\,\phi_{\x'}(-\ome_t) \}$ to each $\x_i$'s representation $\Phi(\x_i)$.
\STATE Compute gradient $\mathbf{g}_t := \nabla_{\al} F(\al_t, \lambda_t)$, where $\lambda_t = \delta_{\ome_t} + \delta_{-\ome_t}$.
\STATE Update $\al_{t+1} \leftarrow \text{Proj}_\mathcal{K}[\al_t + \eta_t \mathbf{g}_t]$. 
\ENDFOR
\RETURN features $\{ \Phi(\x_i) \in \R^{2T} \}_{i=1}^n$, or dual measure $\bar\lambda := \frac{1}{2T} \sum_{t=1}^T \delta_{\ome_t}\!+\delta_{-\ome_t}$.
\end{algorithmic}
\end{algorithm}

We provide a theoretical analysis in Section~\ref{subsection:nash}, and detailed discussion on heuristics, hyperparameters, and implementation details in depth in Appendix~\ref{appendix-section:experiments}. One important note is that the online gradient $g_t = \mathbf{1} - 2 \Y \, \mathrm{Re}(\ang{\Phi_t, \Y\al_t} \overline{\Phi_t})$ is computed very efficiently, where $\Phi_t(i) = \phi_{\x_i}(\ome_t) \in \C^m$ is the vector of the most recently appended features.
Langevin dynamics, easily implemented on a GPU, comprise the primary time bottleneck.

\section{Theory}

\subsection{Convergence of No-Regret Learning Dynamics}
\label{subsection:nash}

We first state the main theoretical result, which quantifies the convergence properties of Algorithm~\ref{alg:svm}.
\begin{theorem}[Main]
\label{thm:main}
Assume that at each step $t$, Algorithm~\ref{alg:langevin} returns an $\epsilon_t$-approximate global maximizer $\ome_t$ (i.e., $v_{\al_t}\!(\ome_t) \geq \sup_{\ome \in \Omega} v_{\al_t}\!(\ome) - \epsilon_t$).
Then, with a certain choice of step sizes $\eta_t$, Algorithm~\ref{alg:svm} produces a dual measure $\bar \lambda \in \Lcal$ which satisfies
\[\max_{\al\in\mathcal{K}}F(\al,\bar \lambda)\leq\min_{\lambda\in\Lcal}\max_{\al\in\mathcal{K}}F(\al,\lambda)+\frac{\sum_{t=1}^T\epsilon_t}{2T} + \frac{3(C + 2C^2) n}{4 \sqrt{T}}.\]
\emph{(Alternate form.)} Suppose instead that $v_{\al_t}\!(\ome_t) \geq \rho$ at each time $t$. Then, $\bar \lambda$ satisfies
\[\max_{\al\in\mathcal{K}}F(\al,\bar \lambda)\leq\rho + \frac{3(C + 2C^2) n}{4 \sqrt{T}}.\]
\end{theorem}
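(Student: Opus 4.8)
\emph{Proof strategy.} The plan is to recognise the game \eqref{eq:margin-game} as a concave--linear zero-sum game and to analyse Algorithm~\ref{alg:svm} as standard no-regret dynamics, exactly in the spirit of boosting \citep{freund1996game}: the min-player plays an approximate best response via Algorithm~\ref{alg:langevin}, and the max-player plays online gradient ascent. First I would record the minimax structure. Since $F(\al,\lambda)=\mathbf 1^\top\al-\tfrac12\al^\top\Y\G_k\Y\al$ is affine in $\lambda$, and concave and continuous in $\al$ (as $\G_k\succeq 0$), while $\Kcal$ is convex compact and $\Lcal$ is convex, Sion's minimax theorem gives that the value $v^\star:=\min_{\lambda\in\Lcal}\max_{\al\in\Kcal}F(\al,\lambda)=\max_{\al\in\Kcal}\min_{\lambda\in\Lcal}F(\al,\lambda)$ is attained on the $\al$-side; this is the quantity on the right of the main bound.

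Next, I would check that line~6 of Algorithm~\ref{alg:svm} is an $\tfrac{\epsilon_t}{2}$-approximate best response for the min-player. For fixed $\al$, minimising the affine map $\lambda\mapsto F(\al,\lambda)$ over the symmetry-constrained unit $L_1$-simplex $\Lcal$ amounts to maximising $\int_\Omega\lambda(\ome)v_\al(\ome)\,\diff\ome$; since $v_\al$ is bounded and even, the optimum equals $\sup_\ome v_\al(\ome)$ and is attained by the symmetric Dirac pair $\tfrac12(\delta_{\ome^\star}+\delta_{-\ome^\star})$. Taking $\lambda_t=\tfrac12(\delta_{\ome_t}+\delta_{-\ome_t})\in\Lcal$ with $\ome_t$ an $\epsilon_t$-approximate mode, and using $\int\lambda_t v_{\al_t}=v_{\al_t}(\ome_t)$, yields $F(\al_t,\lambda_t)\le\min_{\lambda\in\Lcal}F(\al_t,\lambda)+\tfrac{\epsilon_t}{2}\le v^\star+\tfrac{\epsilon_t}{2}$, the last step by the minimax identity. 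Meanwhile the max-player runs online gradient ascent on the concave losses $f_t(\al):=F(\al,\lambda_t)$, with gradients $\nabla_\al f_t(\al)=\mathbf 1-\Y\G_{k_t}\Y\al$; Zinkevich's bound with step sizes $\eta_t\asymp D/(G\sqrt t)$ gives regret $R_T:=\sum_{t}f_t(\al^\dagger)-\sum_t f_t(\al_t)\le\tfrac32 DG\sqrt T$ against any fixed $\al^\dagger\in\Kcal$, where $D:=\operatorname{diam}(\Kcal)$ and $G:=\sup_{t,\al\in\Kcal}\norm{\nabla_\al f_t(\al)}_2$.

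To conclude, take $\al^\dagger:=\argmax_{\al\in\Kcal}F(\al,\bar\lambda)$ and use that $F$ is affine in $\lambda$ with $\bar\lambda=\tfrac1T\sum_t\lambda_t$:
\[\max_{\al\in\Kcal}F(\al,\bar\lambda)=\tfrac1T\textstyle\sum_{t=1}^T F(\al^\dagger,\lambda_t)\le\tfrac1T\textstyle\sum_{t=1}^T F(\al_t,\lambda_t)+\tfrac{R_T}{T}\le v^\star+\tfrac{\sum_t\epsilon_t}{2T}+\tfrac{3DG}{2\sqrt T}.\]
The primary statement follows once $D,G$ are made explicit: $D\le C\sqrt n$ from the box constraint, and $G$ is bounded from $\norm{\mathbf 1}_2=\sqrt n$ together with the rank-two form $\G_{k_t}=\Phi^{\Re}_t(\Phi^{\Re}_t)^\top+\Phi^{\Im}_t(\Phi^{\Im}_t)^\top$ satisfying $\norm{\Phi^{\Re}_t}_2^2+\norm{\Phi^{\Im}_t}_2^2=n$ and $\norm{\al}_2\le C\sqrt n$; substituting and tracking constants gives the $\tfrac{3(C+2C^2)n}{4\sqrt T}$ term. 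For the alternate form, the only change is in the middle step: the hypothesis already bounds the per-round objective $F(\al_t,\lambda_t)\le\rho$ (equivalently, the alignment $v_{\al_t}(\ome_t)$ is large enough), so the same averaging-plus-regret chain yields $\max_{\al\in\Kcal}F(\al,\bar\lambda)\le\rho+\tfrac{3(C+2C^2)n}{4\sqrt T}$, with no appeal to the minimax identity needed.

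I expect the main obstacles to be bookkeeping rather than conceptual. The delicate points are: (i) verifying the best-response reduction \emph{inside} the symmetry-constrained simplex $\Lcal$, so that the factor $\tfrac12$ and the forced pairing $\pm\ome_t$ are handled correctly; and (ii) pinning down the exact constant $\tfrac{3(C+2C^2)n}{4}$ in the regret term, which requires the sharp estimates of $\operatorname{diam}(\Kcal)$ and of $\sup_{t,\al}\norm{\nabla_\al F(\al,\lambda_t)}_2$ over the trajectory, combined with the precise numerical constant coming from the online-gradient-ascent step-size schedule. The underlying game-theoretic skeleton --- approximate best response versus no-regret, then average --- is otherwise routine.
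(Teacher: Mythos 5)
Your proposal follows essentially the same route as the paper's proof in Appendix~C: treat the min-player's move as an $\epsilon_t$-approximate best response over the symmetric simplex $\Lcal$, apply Zinkevich's online-gradient-ascent regret bound for the max-player with $D\le C\sqrt n$ and a gradient Lipschitz bound $G$, and then average the per-round duals, using linearity of $F$ in $\lambda$ to identify the average with $F(\cdot,\bar\lambda)$. One small remark: the appeal to Sion's minimax theorem is not actually needed in your chain --- the inequality $\min_\lambda F(\al_t,\lambda)\le\min_\lambda\max_\al F(\al,\lambda)$ is just weak duality, which is the only direction you use; the paper's version of the argument likewise avoids invoking a minimax identity.
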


We prove Theorem~\ref{thm:main} in Appendix~\ref{appendix-section:nash}. For convenience, we state the first version as an explicit margin bound (in terms of competitive ratio $M / M^*$):
\begin{corollary}
\label{thm:margin}
Let $M$ be the margin obtained by training an $\ell_1$ linear SVM with the same $C$ as in Algorithm~\ref{alg:svm}, on the transformed samples $\{(\Phi(\x_i), y_i)\}$. Then, $M$ is $(1 - \delta)$-competitive with $M^*$, the maximally achievable margin by a kernel with the assumed invariance, with
\[ \delta \leq \frac{\sum_t\epsilon_t}{T} + \frac{3(C + 2C^2) n}{4 \sqrt{T}}. \]
\end{corollary}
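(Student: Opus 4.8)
The plan is to derive Corollary~\ref{thm:margin} directly from the first (alternate-form-free) bound in Theorem~\ref{thm:main}, by unwinding the definitions of margin and of the objective $F$. First I would recall that for an $\ell_1$-SVM with box parameter $C$ on a fixed feature representation realizing a kernel $k$ with dual $\lambda_k$, the optimal dual value is exactly $\max_{\al \in \Kcal} F(\al, \lambda_k)$, and this quantity equals the reciprocal squared margin (up to the standard normalization constant fixed by $\norm{\lambda_k}_1 = 1$); call it $1/M^2$ or, more precisely, identify the "margin" $M$ with a monotone decreasing function of $\max_\al F(\al,\lambda)$. The feature map $\Phi$ output by Algorithm~\ref{alg:svm} realizes precisely the kernel whose dual is $\bar\lambda = \frac{1}{2T}\sum_t (\delta_{\ome_t} + \delta_{-\ome_t}) \in \Lcal$ — this is the "mixed strategy = invariant kernel" observation made in Section~\ref{subsection:svm} — so training the $\ell_1$-SVM on $\{(\Phi(\x_i), y_i)\}$ yields margin $M$ with $\max_{\al\in\Kcal} F(\al,\bar\lambda)$ as its dual optimum.

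Next I would invoke Theorem~\ref{thm:main}: $\max_{\al}F(\al,\bar\lambda) \le \min_{\lambda\in\Lcal}\max_{\al}F(\al,\lambda) + \frac{\sum_t \epsilon_t}{2T} + \frac{3(C+2C^2)n}{4\sqrt{T}}$. The min-max quantity on the right is, by the same correspondence, the reciprocal (squared) margin $M^*$ achievable by the best kernel with the assumed invariance — this uses the Nash-equilibrium / von Neumann min-max remark that the game value is attained, so $\min_{\lambda}\max_{\al}F = \max_\al\min_\lambda F$ is a genuine optimum over $\Lcal$. Writing $v(\lambda) \defeq \max_\al F(\al,\lambda)$, we have $v(\bar\lambda) \le v(\lambda^*) + \Delta$ with $\Delta = \frac{\sum_t\epsilon_t}{2T} + \frac{3(C+2C^2)n}{4\sqrt T}$, hence $v(\bar\lambda)/v(\lambda^*) \le 1 + \Delta/v(\lambda^*)$. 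Translating back through the (inverse, decreasing) margin-to-objective map and bounding $v(\lambda^*)$ from below by a universal constant (it is $\ge 1$ after the chosen normalization, or one simply works with the ratio directly), this yields $M \ge (1-\delta)M^*$ with $\delta$ controlled by $\Delta$ scaled appropriately — giving the stated $\delta \le \frac{\sum_t \epsilon_t}{T} + \frac{3(C+2C^2)n}{4\sqrt T}$ once the factor-of-two bookkeeping from the squared margin is absorbed.

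The main obstacle — and the only place real care is needed — is pinning down the exact dictionary between "the margin $M$" and "the SVM dual optimum $\max_\al F(\al,\lambda)$," including all normalization constants, and verifying that the competitive ratio $M/M^*$ (rather than an additive gap) comes out with precisely the constants claimed. In particular one must check that the normalization $\norm{\lambda}_1 = 1$ imposed on $\Lcal$ is exactly what makes $k(\x,\x)$ constant across kernels so that the $F$-values of different $\lambda$ are comparable as (reciprocal) margins, and track how the additive bound on $v(\bar\lambda) - v(\lambda^*)$ becomes a multiplicative bound on margins — this is where the $\epsilon_t/2T$ in Theorem~\ref{thm:main} turns into $\epsilon_t/T$ in the corollary. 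Everything else (the min-max equals max-min step, the identification of $\Phi$ with the kernel $\bar\lambda$) is either quoted from earlier in the paper or is a one-line consequence of strong duality for the SVM QP, so I would state those briefly and spend the bulk of the write-up on the normalization/ratio computation.
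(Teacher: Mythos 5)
The paper does not actually supply a proof of this corollary --- it is presented as a reformulation of the first inequality in Theorem~\ref{thm:main}, with only a one-line remark pointing to the regret analysis and to the boosting analogy of Freund and Schapire. Your proposed derivation (identify the SVM dual optimum with a monotone transform of the margin, note that $\Phi$ realizes the kernel with dual measure $\bar\lambda$, apply Theorem~\ref{thm:main}, then convert the additive gap into a competitive ratio) is the natural strategy, and the first two ingredients are correctly quoted. However, you stop exactly where the work actually lies, and the way you hand-wave past it would not close the gap.

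The crux is the additive-to-multiplicative conversion, and your sketch has two unjustified steps there. First, you assert that the min-max value $v(\lambda^*) = \min_\lambda \max_\al F(\al,\lambda)$ is bounded below by a universal constant ($\geq 1$) ``after the chosen normalization.'' Nothing in the paper supports this; the normalization $\|\lambda\|_1=1$ fixes $k(\x,\x)$, not the dual optimal value, and $F(0,\lambda)=0$ only gives $v(\lambda^*) \geq 0$. Without such a lower bound there is no clean way to turn $v(\bar\lambda) \leq v(\lambda^*) + \Delta$ into $M \geq (1-\delta)M^*$ with $\delta$ independent of $v(\lambda^*)$. Second, and more telling, any scaling factor coming from such a conversion (whether via a reciprocal, a square root, or normalization by $v(\lambda^*)$) would multiply \emph{both} terms of $\Delta$ by the same constant, whereas the corollary's bound changes the $\epsilon$ term from $\tfrac{\sum_t\epsilon_t}{2T}$ to $\tfrac{\sum_t\epsilon_t}{T}$ while leaving the $\tfrac{3(C+2C^2)n}{4\sqrt{T}}$ term untouched. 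Your sketch gestures at ``factor-of-two bookkeeping from the squared margin,'' but a squared-margin conversion would also scale the regret term; the asymmetry cannot be produced by the route you outline. This is worth flagging explicitly: either a different argument is intended, or the corollary's constants (or those in Theorem~\ref{thm:main}) carry a typo. A rigorous write-up must either pin down a precise definition of ``margin'' under which the stated $\delta$ follows, or state the mismatch and correct the constant.
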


This bound arises from the regret analysis of online gradient ascent~\citep{zinkevich2003online}; our analysis is similar to the approach of \cite{freund1996game}, where they present a boosting perspective.
When using an $\ell_2$-SVM, the final term can be improved to $O( \frac{\log{T}}{T} )$ \citep{hazan2007logarithmic}. For a general overview of results in the field, refer to \cite{hazan2016introduction}.

\subsection{Generalization Bounds}
Finally, we state two (rather distinct) generalization guarantees.
Both depend mildly on a \emph{bandwidth} assumption $\norm{\ome}_2 \leq R_\omega$ and the norm of the data $R_x \defeq \max_i \norm{\x_i}$.
First, we state a margin-dependent SVM generalization bound, due to \cite{koltchinskii2002empirical}. Notice the appearance of $\norm{ \lambda_k }_1=R_\lambda$, justifying our choice of normalization constant for Algorithm~\ref{alg:langevin}. Intriguingly, the end-to-end generalization error of our method {\emph {decreases}} with an increasing number of random features, since the margin bound is being refined during Algorithm~\ref{alg:svm}.

\begin{theorem}[Generalization via margin]
\label{thm:margin}
For any SVM decision function $f : \X \rightarrow \R$ with a kernel $k_{\lambda}$ constrained by $\|\lambda\|_1\leq R_\lambda$ trained on samples $S$ drawn i.i.d. from distribution $\D$, the generalization error is bounded by
\[\Pr_{(\x,y) \sim \mathcal{D}}[yf(\x)\leq0]\leq \min_\theta \frac{1}{n}\sum_{i=1}^n \ensuremath{\mathbf{1}_{y_if(x_i)\leq\theta}}+\frac{6R_\ome R_x}{\theta}\sqrt{\frac{R_\lambda}{n}}+3\sqrt{\frac{\log\frac{2}{\delta}}{n}}.\]
\end{theorem}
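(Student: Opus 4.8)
The plan is to treat this as an instance of the margin-based uniform-convergence bound of \cite{koltchinskii2002empirical} (equivalently, the standard Rademacher-complexity margin bound for a bounded loss), so that the only genuinely new ingredient is the empirical Rademacher complexity of the class of decision functions our method can output. First I would fix a threshold $\theta>0$ and introduce the ramp $\psi_\theta:\R\to[0,1]$ equal to $1$ on $(-\infty,0]$, to $0$ on $[\theta,\infty)$, and linear in between; it is $1/\theta$-Lipschitz and satisfies $\mathbf 1_{t\le 0}\le\psi_\theta(t)\le\mathbf 1_{t\le\theta}$, so $\Pr_{\mathcal D}[yf(\x)\le 0]\le\E_{\mathcal D}[\psi_\theta(yf(\x))]$ while the empirical average of $\psi_\theta(y_if(\x_i))$ is at most $\tfrac1n\sum_i\mathbf 1_{y_if(\x_i)\le\theta}$. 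Applying a bounded-differences inequality to $\sup_{f}\bigl(\E_{\mathcal D}[\psi_\theta(yf(\x))]-\tfrac1n\sum_i\psi_\theta(y_if(\x_i))\bigr)$, together with symmetrization and a union bound over a geometric grid of thresholds, yields with probability $1-\delta$ a bound of the form $\E_{\mathcal D}[\psi_\theta(yf(\x))]\le\tfrac1n\sum_i\psi_\theta(y_if(\x_i))+2\,\mathcal R_n(\psi_\theta\circ\mathcal F)+3\sqrt{\log(2/\delta)/n}$, which already supplies the empirical term and the last term of the claim. Since $\psi_\theta$ is $1/\theta$-Lipschitz and adding a constant does not change Rademacher complexity, Talagrand's contraction lemma gives $\mathcal R_n(\psi_\theta\circ\mathcal F)\le\tfrac1\theta\,\mathcal R_n(\mathcal F)$, where $\mathcal F$ is the class of unit-RKHS-norm SVM decision functions ranging over all admissible duals $\lambda$ with $\norm\lambda_1\le R_\lambda$.

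The remaining work is to show $\mathcal R_n(\mathcal F)\le 3R_\ome R_x\sqrt{R_\lambda/n}$. Writing $f=\inner{w,\phi^\lambda_\x}_{\mathcal H_\lambda}$ with $\norm w\le 1$ and using the Fourier feature maps of Section~\ref{subsection:bochner}, duality of the RKHS ball gives
\[\mathcal R_n(\mathcal F)=\frac1n\,\E_\sigma\sup_{\norm\lambda_1\le R_\lambda}\Bigl\|\sum_{i=1}^n\sigma_i\phi^\lambda_{\x_i}\Bigr\|_{\mathcal H_\lambda}=\frac1n\,\E_\sigma\sup_{\norm\lambda_1\le R_\lambda}\Bigl(\int_\Omega\lambda(\ome)\,v_\sigma(\ome)\,\diff\ome\Bigr)^{1/2},\]
where $v_\sigma(\ome)=\bigl|\sum_i\sigma_i e^{\iota\ang{\ome,\x_i}}\bigr|^2$ is the Fourier potential with Rademacher-reweighted data (and the spherical-harmonic analogue, a maximum over the finitely many indices with $\ell\le R_\ome$, in the rotation-invariant case). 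The integrand is linear in $\lambda\ge 0$, so the supremum over the $L_1$-ball of radius $R_\lambda$ is attained by a Dirac mass and equals $R_\lambda\sup_\ome v_\sigma(\ome)$; hence $\mathcal R_n(\mathcal F)=\frac{\sqrt{R_\lambda}}{n}\,\E_\sigma\sup_{\norm\ome_2\le R_\ome}\bigl|\sum_i\sigma_i e^{\iota\ang{\ome,\x_i}}\bigr|$. It then suffices to prove $\E_\sigma\sup_{\norm\ome_2\le R_\ome}\bigl|\sum_i\sigma_i e^{\iota\ang{\ome,\x_i}}\bigr|\le 3R_\ome R_x\sqrt n$. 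For this I would combine the elementary $\sqrt n$ bound on $\E_\sigma|\sum_i\sigma_i e^{\iota\ang{\ome,\x_i}}|$ at a single frequency with a smoothness/maximal-inequality (Dudley-type chaining) argument over the radius-$R_\ome$ ball, using that $\ome\mapsto e^{\iota\ang{\ome,\x_i}}$ is $\norm{\x_i}$-Lipschitz with $\norm{\x_i}\le R_x$, so the chaining integral lives at scale $R_\ome R_x$; in the rotation case one instead sums the (polynomially growing, but $\ell$-bounded) sup-norms of the harmonics.

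The main obstacle is precisely this last bound on $\E_\sigma\sup_\ome v_\sigma(\ome)$: it is the non-convex ``global Fourier peak'' problem of Section~\ref{subsection:langevin} reappearing inside the analysis, and the bandwidth hypothesis $\norm\ome\le R_\ome$ together with $R_x$ is exactly what prevents the supremum from blowing up --- morally, it caps how many distinct frequencies an adversary can exploit against the Rademacher signs, and I would expect controlling its $d$-dependence (keeping it out of the final constant) to be the delicate point. Everything else is the standard Koltchinskii--Panchenko machinery: the ramp-loss sandwich, the bounded-differences concentration step producing the $\sqrt{\log(2/\delta)/n}$ rate, and the contraction inequality that extracts the $1/\theta$. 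Assembling the pieces, $\tfrac2\theta\,\mathcal R_n(\mathcal F)\le\frac{6R_\ome R_x}{\theta}\sqrt{R_\lambda/n}$, and since the inequality holds for every $\theta$ one minimizes over $\theta$ to obtain the stated bound.
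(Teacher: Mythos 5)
Your high-level framework is the same as the paper's: Koltchinskii--Panchenko margin bound via a ramp-loss sandwich, bounded-differences concentration for the $\sqrt{\log(2/\delta)/n}$ term, contraction to extract $1/\theta$, and the reduction of the empirical Rademacher complexity by linearity of the alignment in $\lambda$ to
\[
\hat{\mathcal R}_S \;=\; \frac{\sqrt{R_\lambda}}{n}\,\E_\sigma\sup_{\norm{\ome}\le R_\ome}\Bigl|\sum_{i=1}^n\sigma_i e^{\iota\ang{\ome,\x_i}}\Bigr|.
\]
That reduction matches the paper step for step. Where you diverge is exactly the point you flag as ``the delicate point'': how to bound $\E_\sigma\sup_{\ome}\bigl|\sum_i\sigma_i e^{\iota\ang{\ome,\x_i}}\bigr|$. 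Your proposal to do Dudley-type chaining over the radius-$R_\ome$ Euclidean ball in $\R^d$ does not close this gap: the metric entropy of that ball scales like $d\log(R_\ome/\epsilon)$, and the increments $\sum_i\sigma_i\bigl(e^{\iota\ang{\ome,\x_i}}-e^{\iota\ang{\ome',\x_i}}\bigr)$ have sub-Gaussian scale $\sqrt n R_x\norm{\ome-\ome'}$, so the Dudley integral comes out to $O(R_\ome R_x\sqrt{nd})$ --- an extra $\sqrt d$ factor that the theorem does not pay, and which would ruin the dimension-free guarantee.

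The idea you are missing is that no chaining over $\ome$ is needed at all. Split $e^{\iota\ang{\ome,\x}}$ into real and imaginary parts, so
\[
\frac1n\,\E_\sigma\sup_{\ome}\Bigl|\sum_i\sigma_i e^{\iota\ang{\ome,\x_i}}\Bigr|
\;\le\;\frac1n\,\E_\sigma\sup_{\ome}\Bigl|\sum_i\sigma_i\cos\ang{\ome,\x_i}\Bigr|
+\frac1n\,\E_\sigma\sup_{\ome}\Bigl|\sum_i\sigma_i\sin\ang{\ome,\x_i}\Bigr|.
\]
Each class $\{\x\mapsto\cos\ang{\ome,\x}:\norm{\ome}\le R_\ome\}$ (resp.\ $\sin$) is the composition of a fixed $1$-Lipschitz \emph{scalar} map with the linear class $\{\x\mapsto\ang{\ome,\x}:\norm{\ome}\le R_\ome\}$, whose empirical Rademacher complexity is $R_\ome R_x/\sqrt n$ by Cauchy--Schwarz. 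Talagrand's contraction lemma (not a covering argument) then bounds the cosine and sine one-sided sups by $R_\ome R_x/\sqrt n$ each, dimension-free. Handling the absolute value costs a factor of $2$ for cosine but only $1$ for sine (since $\sin$ is odd and the $\ome$-ball is symmetric, the two one-sided sups coincide), which is exactly where the constant $3$ comes from: $\hat{\mathcal R}_S\le 3R_\ome R_x\sqrt{R_\lambda/n}$. With that substitution, the rest of your argument goes through as written.
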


The proof can be found in Appendix~\ref{appendix-section:rademacher}. Note that this improves on the generic result for MKL, from Theorem~2 in \citep{cortes2010generalization}, which has a $\sqrt{\log T}$ dependence on the number of base kernels $T$. This improvement stems from the rank-one property of each component kernel.

Next, we address another concern entirely: the sample size required for $v(\ome)$ to approximate the ideal Fourier potential $v_\mathrm{ideal}(\ome)$, the squared magnitude of the Fourier transform of the signed measure $\Pbb - \Qbb$ arising from the true distribution. For the shift-invariant case:
\begin{theorem}[Generalization of the potential]
\label{thm:vgen}
Let $S$ be a set of i.i.d. training samples on $\D$, with
$n \geq O\pa{ \frac{d \log \frac{1}{\eps} + \log \frac{1}{\delta}}{\eps^2} }$.
We have that for all $\ome : \norm{\ome} \leq R_\omega$, with probability at least $1-\delta$,
\[ \abs{ \frac{v(\ome)}{n^2} - v_\mathrm{ideal}(\ome) } \leq \eps. \]
The $O(\cdot)$ suppresses factors polynomial in $R_x$ and $R_\omega$.
\end{theorem}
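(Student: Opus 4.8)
The plan is to view $v(\ome)/n^2$ as a $V$-statistic (empirical average over pairs) whose expectation is $v_\mathrm{ideal}(\ome)$, and then to establish a *uniform* concentration bound over the ball $\{\ome : \norm{\ome} \le R_\omega\}$ via a standard covering-number / $\eps$-net argument combined with a Lipschitz estimate in $\ome$. Concretely, write $\mu_n = \Pbb_n - \Qbb_n$ for the (signed, normalized) empirical measure and $\mu = \Pbb - \Qbb$ for its population counterpart, and set $\hat g(\ome) = \frac{1}{n}\sum_i y_i e^{\iota\ang{\ome,\x_i}} = \int e^{\iota\ang{\ome,\x}}\diff\mu_n(\x)$ and $g(\ome) = \E[y\, e^{\iota\ang{\ome,\x}}] = \int e^{\iota\ang{\ome,\x}}\diff\mu(\x)$, so that $v(\ome)/n^2 = |\hat g(\ome)|^2$ and $v_\mathrm{ideal}(\ome) = |g(\ome)|^2$. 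Since $|\hat g|, |g| \le 1$, the identity $\big||\hat g|^2 - |g|^2\big| = \big||\hat g| - |g|\big|\cdot\big(|\hat g|+|g|\big) \le 2\,|\hat g(\ome) - g(\ome)|$ reduces the problem to showing $\sup_{\norm{\ome}\le R_\omega} |\hat g(\ome) - g(\ome)| \le \eps/2$ with the claimed sample complexity.

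The first main step is pointwise concentration: for fixed $\ome$, $\hat g(\ome) - g(\ome)$ is an average of $n$ i.i.d. bounded (complex, magnitude $\le 2$) mean-zero terms, so by a (vector-valued) Hoeffding/Bernstein inequality, $\Pr\big[|\hat g(\ome) - g(\ome)| > t\big] \le C\exp(-c n t^2)$ — applying the scalar bound separately to real and imaginary parts suffices. The second step is to control the fluctuation of $\ome \mapsto \hat g(\ome) - g(\ome)$: each summand $\x \mapsto e^{\iota\ang{\ome,\x}}$ is $\norm{\x}$-Lipschitz in $\ome$, hence $\hat g - g$ is $(R_x)$-Lipschitz (in the sup over the empirical sample, which is bounded by $R_x$ by assumption; one handles the population term's Lipschitz constant by the same bound on the support of $\D$, which is where the "polynomial in $R_x$" in the $O(\cdot)$ enters). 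Therefore it is enough to take an $\eps'$-net $\mathcal{N}$ of the ball of radius $R_\omega$ with $\eps' = \Theta(\eps / R_x)$; such a net has $|\mathcal{N}| \le (3 R_\omega R_x / \eps)^d$ points. The third step is a union bound over $\mathcal{N}$: choosing $t = \eps/4$ and $n = O\!\big(\frac{1}{\eps^2}\big(\log|\mathcal{N}| + \log\frac{1}{\delta}\big)\big) = O\!\big(\frac{d\log(R_\omega R_x/\eps) + \log(1/\delta)}{\eps^2}\big)$ makes the failure probability at most $\delta$, and the Lipschitz bound transfers the estimate from $\mathcal{N}$ to the whole ball at the cost of another $\eps/4$. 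Folding the constants $R_x, R_\omega$ into the $O(\cdot)$ and absorbing the $\log(R_\omega R_x)$ term gives exactly the stated $n \ge O\!\big(\frac{d\log\frac{1}{\eps} + \log\frac{1}{\delta}}{\eps^2}\big)$.

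The step I expect to require the most care is the uniform (over $\ome$) control — specifically, making the Lipschitz-in-$\ome$ argument rigorous when the data distribution $\D$ need not have bounded support. If $\D$ is only assumed sub-Gaussian (or has bounded $R_x$ only on the sample), one must either add a mild moment assumption on $\D$ or truncate: intersect with the event that all samples lie in a ball of radius $R_x$, which holds with high probability, and separately argue the population term $g(\ome)$ is smooth using $\E\norm{\x} < \infty$. An alternative that sidesteps the net entirely is to bound $\E\sup_{\norm{\ome}\le R_\omega}|\hat g(\ome) - g(\ome)|$ directly by symmetrization and a Dudley entropy-integral / Rademacher-complexity estimate for the function class $\{\x \mapsto \cos\ang{\ome,\x}\} \cup \{\x\mapsto\sin\ang{\ome,\x}\}$, whose Rademacher complexity is $O(R_\omega R_x/\sqrt{n})$ by the contraction principle applied to the $1$-Lipschitz $\cos,\sin$ composed with the linear class $\{\ang{\ome,\cdot}\}$; this yields the $\eps^{-2}$ and $d$-free leading behavior, with the $d$ reappearing only through $\log(1/\eps)$ if one insists on a high-probability (rather than in-expectation) statement via bounded differences. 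Either route is routine once the boundedness issue is pinned down; the bulk of the write-up is bookkeeping the constants.
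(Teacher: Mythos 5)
Your proof is correct and ultimately follows the same skeleton as the paper's (pointwise concentration at a single $\ome$, Lipschitz control in $\ome$, an $\eps$-net of the radius-$R_\omega$ ball, then a union bound), but the concentration step is handled in a genuinely different and arguably cleaner way. The paper applies McDiarmid's bounded-difference inequality directly to the quadratic statistic $v^{(n)}(\ome)/n^2$ (showing it has bounded differences of order $R_x R_\omega/n$ in each $\x_i$), whereas you first reduce via the elementary identity $\bigl||\hat g|^2 - |g|^2\bigr| \le \bigl(|\hat g|+|g|\bigr)\,|\hat g - g| \le 2\,|\hat g - g|$ and then apply a scalar Hoeffding bound to the linear quantity $\hat g(\ome) - g(\ome)$, which is an honest i.i.d.\ average. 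This reduction buys two things: (i) it needs only the most basic concentration tool, and (ii) it sidesteps a subtle issue the paper glosses over, namely that $\E\bigl[v^{(n)}(\ome)/n^2\bigr]$ is not exactly $v_\mathrm{ideal}(\ome)$ — as a $V$-statistic it carries an $O(1/n)$ bias — so McDiarmid (which controls deviation from the mean) doesn't literally give the stated inequality without a bias correction, whereas your route centers $\hat g$ at its true mean $g$ and has no such gap. Your Lipschitz estimate (gradient of $e^{\iota\ang{\ome,\x}}$ in $\ome$ has norm $\norm{\x}\le R_x$) mirrors the paper's Lemma on Lipschitzness in $\ome$, and your net-size and union-bound bookkeeping recovers the same $n \gtrsim \eps^{-2}\bigl(d\log\frac{1}{\eps} + \log\frac{1}{\delta}\bigr)$ up to polynomial factors in $R_x, R_\omega$, which is all that is claimed. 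Your caveat about boundedness of $\D$'s support (needed to make the population term $g$ Lipschitz) is a legitimate point the paper leaves implicit; the paper's own constants assume $\norm{\x}\le R_x$ on the support, so your truncation remark is the right way to make it fully rigorous. The alternative Rademacher/Dudley route you sketch would also work and is in the same spirit as the paper's Appendix~\ref{appendix-section:rademacher} computation, but the net argument you actually carry out is closer to what the paper does.
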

The full statement and proof are standard, and deferred to Appendix~\ref{appendix-section:v-gen}.
In particular, this result allows for a mild guarantee of polynomial hitting-time on the locus of approximate local maxima of $v_\mathrm{ideal}$ (as opposed to the empirical $v$). Adapting the main result from \cite{zhang2017hitting}:
\begin{theorem}[Langevin hitting time]
\label{thm:percy}
Let $\ome_\tau$ be the output of Algorithm~\ref{alg:langevin} on $v_{\al}(\ome)$, after $\tau$ steps.
Algorithm~\ref{alg:langevin} finds an approximate local maximum of $v_\mathrm{ideal}$ in polynomial time. That is, with $U$ being the set of $\epsilon$-approximate local maxima of $v_\mathrm{ideal}(\ome)$, some $\ome_t$ satisfies
\[v(\ome_t)\geq \inf_{d(U,\ome)\leq\Delta}v(\ome)\]
for some $\tau \leq O(\poly{(R_x, R_\ome, d, \zeta, \xi, \epsilon^{-1}, \Delta^{-1}, \log(1/\delta)})$, with probability at least $1-\delta$.
\end{theorem}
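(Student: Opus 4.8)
The plan is to reduce the statement directly to the main theorem of \cite{zhang2017hitting} by verifying that the reweighted Fourier potential $v_{\al}(\ome)$ — or more precisely its population analogue $v_\mathrm{ideal}(\ome)$ — satisfies the regularity hypotheses required by that result, namely smoothness (bounded gradient and Hessian) and a dissipativity/confinement condition that controls the tails of the Langevin diffusion. First I would fix $\al$ and write $v_{\al}(\ome) = \bigl|\sum_i y_i \al_i e^{\iota\ang{\ome,\x_i}}\bigr|^2$; since this is a finite trigonometric sum, its gradient and all higher derivatives are bounded by polynomials in $R_x$, $R_\ome$, and $\sum_i \al_i \le Cn$. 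Differentiating twice gives $\|\nabla v_{\al}\|, \|\nabla^2 v_{\al}\| \le \poly(R_x, R_\ome, Cn)$ on the ball $\{\norm{\ome}\le R_\ome\}$, establishing the Lipschitz-gradient condition. The diffusion is confined to this ball in practice (and can be enforced via the optional $\ell_2$ regularizer mentioned after Algorithm~\ref{alg:langevin}), which supplies the dissipativity condition with the stated dependence on $\zeta$ (diffusion rate) and $\xi$ (temperature); alternatively one adds a quadratic confining term and absorbs it into the potential.

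Next I would invoke Theorem~\ref{thm:vgen} to pass from the empirical potential $v$ to the ideal potential $v_\mathrm{ideal}$: with $n \ge O\pa{\frac{d\log(1/\eps)+\log(1/\delta)}{\eps^2}}$ samples, $|v(\ome)/n^2 - v_\mathrm{ideal}(\ome)| \le \eps$ uniformly over $\norm{\ome}\le R_\ome$, with probability $\ge 1-\delta$. Hence an $\eps$-approximate local maximum of the empirical landscape is an $O(\eps)$-approximate local maximum of $v_\mathrm{ideal}$ and vice versa (after rescaling by $n^2$, which does not affect the location of critical points), so it suffices to prove the hitting-time bound for the empirical $v_{\al}$, which is exactly what Algorithm~\ref{alg:langevin} runs on. Applying \cite{zhang2017hitting} to the smooth confined potential $v_{\al}$ then yields: after $\tau \le O(\poly(R_x, R_\ome, d, \zeta, \xi, \epsilon^{-1}, \Delta^{-1}, \log(1/\delta)))$ steps, with probability $\ge 1-\delta$ some iterate $\ome_t$ lands within distance $\Delta$ of the set $U$ of $\epsilon$-approximate local maxima of $v_\mathrm{ideal}$, so that $v(\ome_t) \ge \inf_{d(U,\ome)\le\Delta} v(\ome)$. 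A union bound over the two high-probability events (the sample-complexity event from Theorem~\ref{thm:vgen} and the diffusion event from \cite{zhang2017hitting}) gives the final claim after adjusting constants.

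The main obstacle is that \cite{zhang2017hitting} is stated for a fixed potential, whereas Algorithm~\ref{alg:langevin} is called repeatedly with a changing weight vector $\al_t$ inside Algorithm~\ref{alg:svm}; the clean statement here concerns a single invocation, so for the theorem as stated this is not an issue, but one must be careful that all the polynomial constants ($L$-smoothness, dissipativity radius, spectral gap) are uniform in $\al$ over the feasible set $\Kcal$ — which holds because $\Kcal = \{0\preccurlyeq\al\preccurlyeq C,\ \y^T\al=0\}$ is compact and the bounds above depend on $\al$ only through $\sum_i\al_i \le Cn$. A secondary technical point is matching the precise notion of ``approximate local maximum'' (gradient norm small versus the landscape condition used in \cite{zhang2017hitting}); this is handled by the standard translation between the two via the smoothness bound, at the cost of adjusting $\epsilon$ and $\Delta$ by polynomial factors already absorbed in the $O(\poly(\cdot))$.
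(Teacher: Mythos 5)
The paper never actually writes out a proof of Theorem~\ref{thm:percy}: there is no appendix section for it, and the only justification given is the one-line remark ``Adapting the main result from \cite{zhang2017hitting}.'' Your proposal is therefore not competing with an existing proof but filling in the adaptation the authors left implicit, and it does so along the lines they clearly intended: verify the regularity hypotheses of the hitting-time result (gradient and Hessian bounds are elementary for the finite trigonometric sum $v_{\al}$), invoke Theorem~\ref{thm:vgen} to transfer uniformly from the empirical potential $v$ to the population potential $v_\mathrm{ideal}$ on the ball $\{\|\ome\|\le R_\omega\}$, and then apply \cite{zhang2017hitting} together with a union bound. That is the right skeleton, and your care about uniformity over $\al\in\Kcal$ (via $\sum_i\al_i\le Cn$) is a correct and worthwhile detail.

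Two points deserve sharpening. First, the dissipativity/confinement issue you flag is a genuine gap in the theorem as stated: $v_{\al}$ is a bounded almost-periodic function whose gradient does not point inward at large $\|\ome\|$, so the raw diffusion in Algorithm~\ref{alg:langevin} is not confined, and the dependence on $R_\ome$ in the bound implicitly presupposes either a projection onto $\{\|\ome\|\le R_\omega\}$ or the additive quadratic regularizer that the paper only mentions as optional (Section~\ref{subsection:langevin}). You should state explicitly that the theorem is proved for this modified dynamics, rather than leaving it as an ``in practice'' parenthetical. Second, be careful with the logical direction in the concluding step: the main result of \cite{zhang2017hitting} is phrased as a guarantee on the \emph{function value} at the iterate relative to $\inf_{d(U,\ome)\le\Delta}$ of the (empirical) objective, not as a guarantee that an iterate physically enters a $\Delta$-neighborhood of $U$. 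Your text derives the value inequality from a proximity claim, which is strictly stronger than what is needed and stronger than what the cited theorem supplies; deriving the value bound directly (as the cited theorem does) is both cleaner and exactly matches the displayed inequality $v(\ome_t)\ge\inf_{d(U,\ome)\le\Delta}v(\ome)$. With those two adjustments the argument is sound and correctly reconstructs what the paper gestures at.
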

Of course, one should not expect polynomial hitting time on approximate \emph{global} maxima; \cite{roberts1996exponential} give asymptotic mixing guarantees.

\section{Experiments}

In this section, we highlight the most important and illustrative parts of our experimental results.
For further details, we provide an extended addendum to the experimental section in Appendix~\ref{appendix-section:experiments}. The code can be found at \href{url}{\texttt{github.com/yz-ignescent/Not-So-Random-Features}}.

\begin{figure}
	\centering
	\begin{subfigure}[b]{\textwidth}\centering%
	\includegraphics[width=0.92\textwidth]{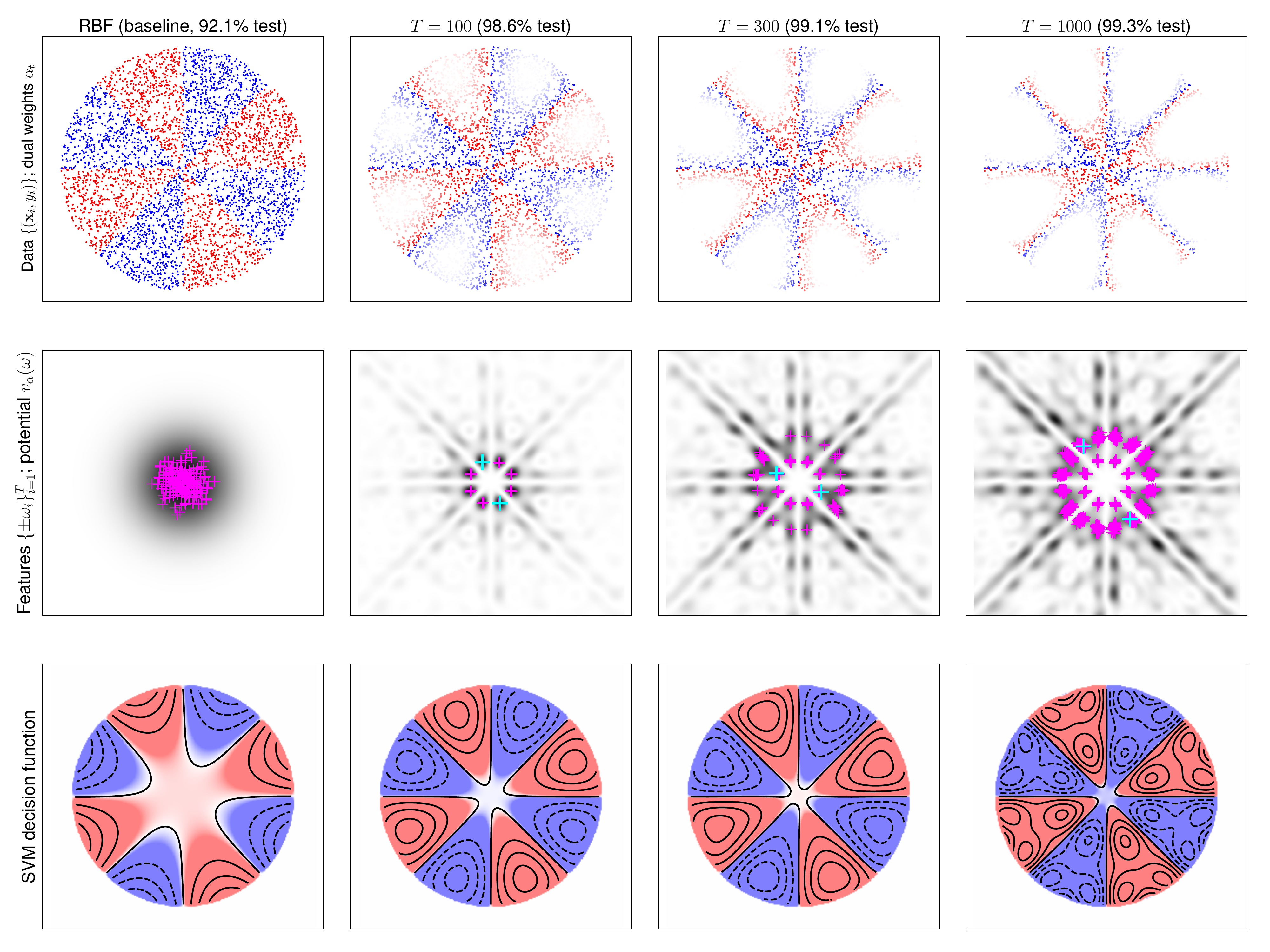}
	\caption{Evolution of Algorithm~\ref{alg:svm}. \emph{Top:} The windmill dataset, weighted by dual variables $\al_t$.
	\emph{Middle:} $2t$ random features (\textcolor{magenta}{magenta}; last added $\pm \ome_t$ in \textcolor{cyan}{cyan}), overlaying the Fourier potential $v_{\al_t}\!(\ome)$ in the background. \emph{Bottom:} Decision boundary of a hinge-loss classifier trained on the kernel, showing refinement at the margin. Contours indicate the value of the margin.}
	\label{fig:evolution}
	\end{subfigure}
	\begin{subfigure}[b]{0.6\textwidth}
	\includegraphics[width=\textwidth]{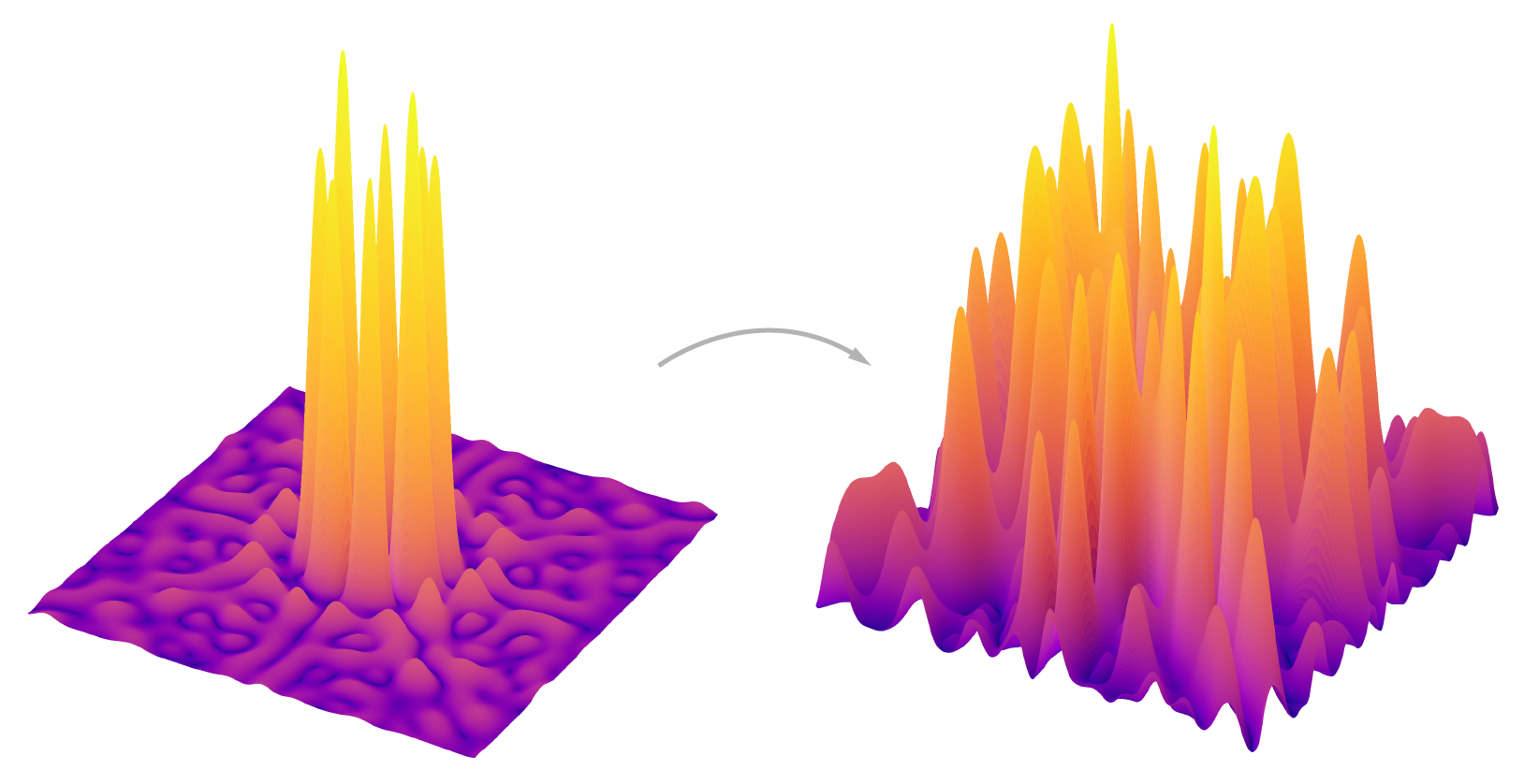}
	\caption{Detailed evolution of the Fourier potential $v_{\al}(\ome)$. \emph{Left:} Initial potential $v(\ome)$, with uniform weights. \emph{Right:} Reweighted $v_{\al}(\ome)$ at $t=300$. Note the larger support of peaks.}
	\label{fig:landscape}
	\end{subfigure}
	\qquad
	\begin{subfigure}[b]{0.33\textwidth}
	\centering
	\includegraphics[width=0.9\textwidth]{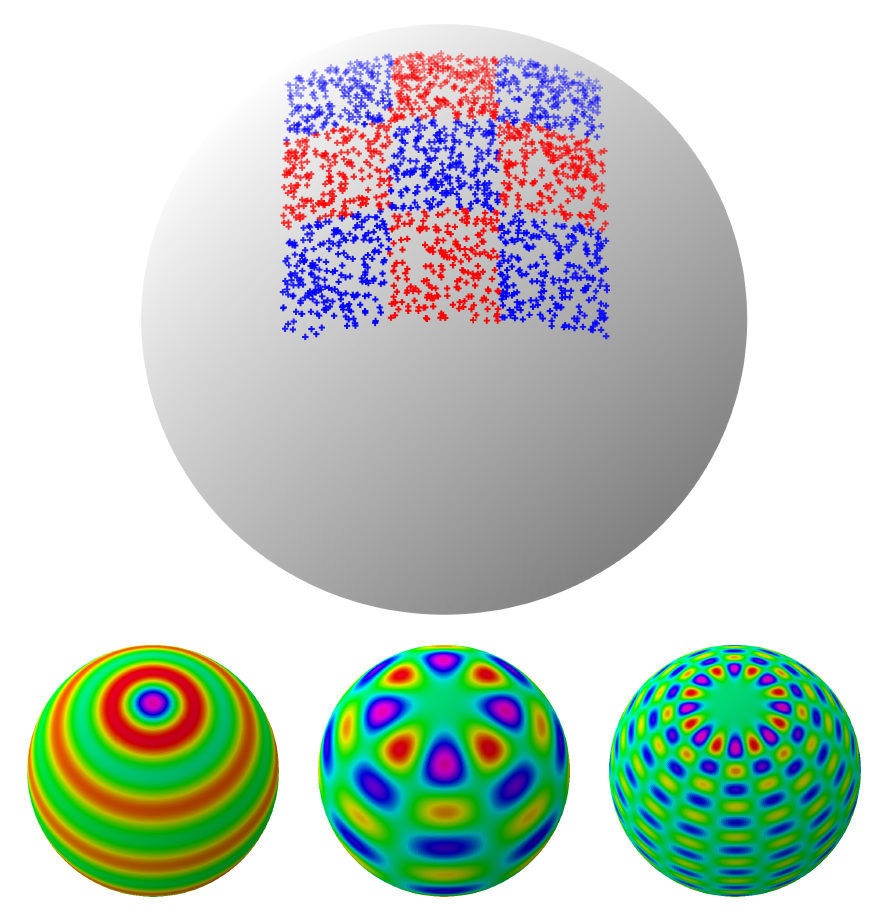}
	\caption{Toy dataset on the sphere, with top-3 selected features: spherical harmonics $Y_{0,11}, Y_{4,9}, Y_{8,27}$.}
	\label{fig:sphere}
	\end{subfigure}
\caption{\textbf{Visualizations and experiments on synthetic data.}}
\label{fig:big}
\end{figure}

First, we exhibit two simple binary classification tasks, one in $\R^2$ and the other on $\Sph^{2}$, to demonstrate the power of our kernel selection method. As depicted in Figure~\ref{fig:big}, we create datasets with sharp boundaries, which are difficult for the standard RBF and \emph{arccosine} \citep{cho2009kernel} kernel. In both cases, $n_\mathrm{train} = 2000$ and $n_\mathrm{test} = 50000$.\footnote{$n_\mathrm{test}$ is chosen so large to measure the true generalization error.} 
Used on a $\ell_1$-SVM classifier, these baseline kernels saturate at 92.1\% and 95.1\% test accuracy, respectively; they are not expressive enough.

On the $\R^2$ ``windmill'' task, Algorithm~\ref{alg:svm} chooses random features that progressively refine the decision boundary at the margin. 
By $T = 1000$, it exhibits almost perfect classification (99.7\% training, 99.3\% test).
Similarly, on the $\Sph^2$ ``checkerboard'' task, Algorithm~\ref{alg:svm} (with some adaptations described in Appendix~\ref{appendix-section:sph-sample}) reaches almost perfect classification (99.7\% training, 99.1\% test) at $T = 100$, supported on only $29$ spherical harmonics as features.

We provide some illuminating visualizations.
Figures~\ref{fig:evolution} and \ref{fig:landscape} show the evolution of the dual weights, random features, and classifier. As the theory suggests, the objective evolves to assign higher weight to points near the margin, and successive features improve the classifier's decisiveness in challenging regions (\ref{fig:evolution}, bottom). Figure~\ref{fig:sphere} visualizes some features from the $\Sph^2$ experiment.

Next, we evaluate our kernel on standard benchmark binary classification tasks. Challenging label pairs are chosen from the MNIST~\citep{lecun1998gradient} and CIFAR-10~\citep{krizhevsky2009learning} datasets; each task consists of $\sim\kern-0.3em10000$ training and $\sim\kern-0.3em2000$ test examples; this is considered to be large-scale for kernel methods.
Following the standard protocol from~\cite{yu2016orthogonal}, 512-dimensional HoG features~\citep{dalal2005histograms} are used for the CIFAR-10 tasks instead of raw images. Of course, our intent is not to show state-of-the-art results on these tasks, on which deep neural networks easily dominate. Instead, the aim is to demonstrate viability as a scalable, principled kernel method.

We compare our results to baseline random features-based kernel machines: the standard RBF random features (RBF-RF for short), and the method of \cite{sinha2016learning} (LKRF),\footnote{Traditional MKL methods are not tested here, as they are noticeably ($>\!100$ times) slower.} using the same $\ell_1$-SVM throughout. As shown by Table~\ref{tab:accuracy}, our method reliably outperforms these baselines, most significantly in the regime of few features. Intuitively, this lines up with the expectation that isotropic random sampling becomes exponentially unlikely to hit a good peak in high dimension; our method \emph{searches} for these peaks.


\begin{table}
\begin{minipage}[b!]{0.33\linewidth}\centering%
\includegraphics[width=\textwidth]{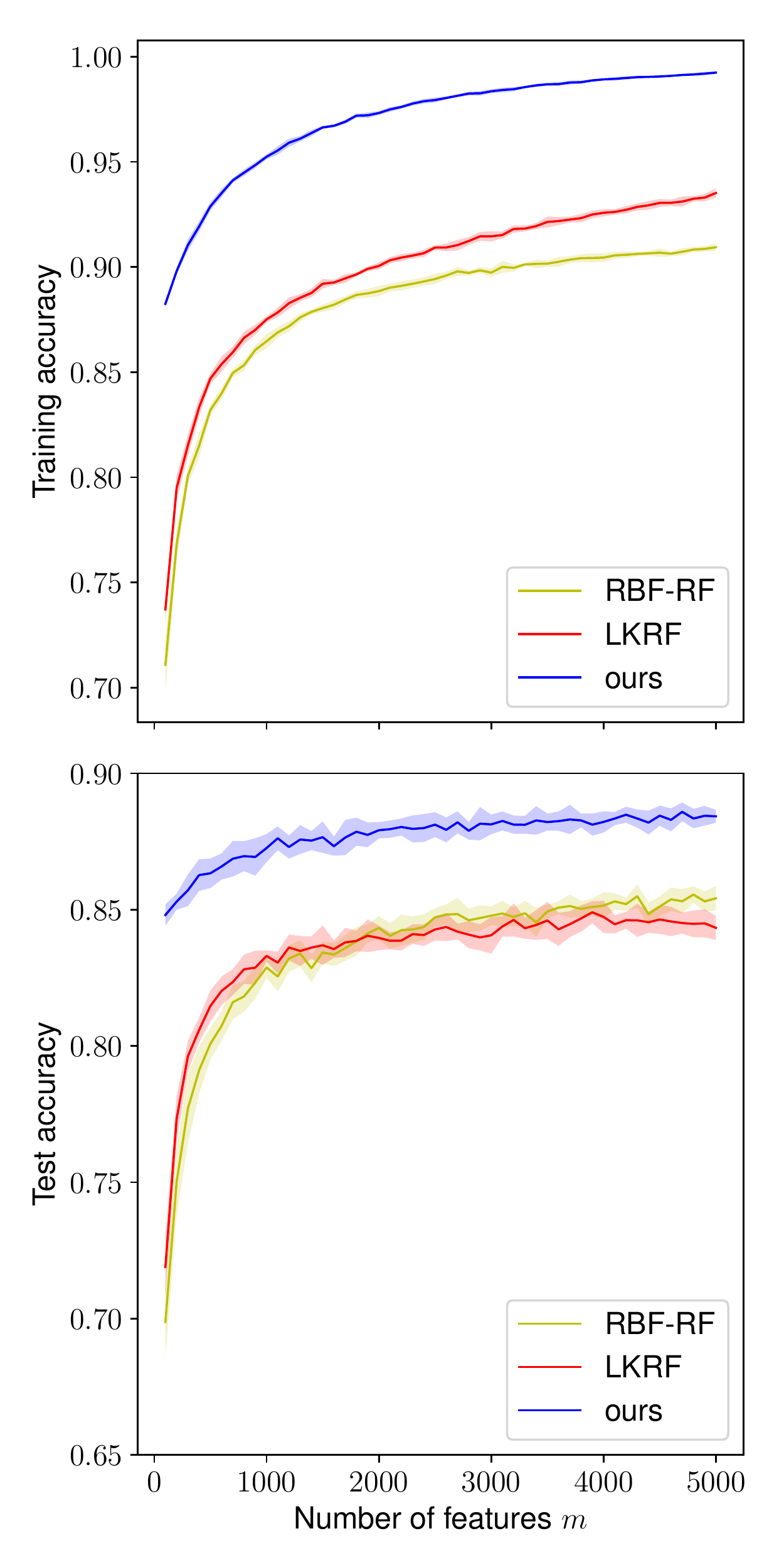}%
\par\vspace{0pt}%
\end{minipage}\hfill%
\begin{minipage}[b]{0.63\linewidth}\centering%
\begin{small}
\begin{tabular}{c|c|c|c|c|c|c}
\hline
\hline
\multirow{2}{5.5em}{Dataset} & \multirow{2}{3.2em}{Method} & \multicolumn{5}{c}{\centering Number of features $m$}\\
\cline{3-7}
& & 100 & 500 & 1000 & 2000 & 5000\\
\hline
\hline
\multirow{3}{5.5em}{MNIST\\(1-7)} & RBF-RF & 97.42 & 99.02 & 99.10 & 99.42 & 99.31  \\
& LKRF & 97.60 & 98.95 & 99.05 & 99.21 & 99.37\\
& ours & 99.05 & 99.28 & 99.45 & 99.63 & 99.65\\
\hline 

\multirow{3}{5.5em}{MNIST\\(4-9)} & RBF-RF & 87.72 & 94.11 & 96.95 & 97.36 & 98.08  \\
& LKRF & 88.42 & 93.99 & 95.88 & 97.18 & 97.76\\
& ours & 93.02 & 96.31 & 97.68 & 98.72 & 99.01\\
\hline 

\multirow{3}{5.5em}{MNIST\\(5-6)} & RBF-RF & 91.96 & 97.63 & 97.85 & 98.51 & 98.82  \\
& LKRF & 92.78 & 97.60 & 97.79 & 98.36 & 98.54\\
& ours & 96.62 & 98.67 & 99.18 & 99.32 & 99.64\\
\hline 

\multirow{3}{5.5em}{CIFAR-10 (plane-bird)} & RBF-RF & 71.75 & 78.30 & 79.85 & 80.15 & 81.25  \\
& LKRF & 73.30 & 78.80 & 79.85 & 80.80 & 81.55\\
& ours & 81.80 & 82.15 & 83.00 & 84.25 & 85.15\\
\hline 

\multirow{3}{5.5em}{CIFAR-10 (auto-truck)} & \textcolor[rgb]{0.7,0.7,0}{RBF-RF} & 69.70 & 75.75 & 82.55 & 83.35 & 85.10  \\
& \textcolor{red}{LKRF} & 71.89 & 77.33 & 83.62 & 84.60 & 84.95\\
& \textcolor{blue}{ours} & 84.80 & 85.90 & 86.00 & 88.25 & 88.80\\
\hline


\multirow{3}{5.5em}{CIFAR-10 (dog-frog)} & RBF-RF & 70.80 & 79.95 & 81.85 & 82.50 & 82.95  \\
& LKRF & 70.75 & 78.95 & 81.25 & 81.35 & 83.00\\
& ours & 81.95 & 84.05 & 85.40 & 85.95 & 86.25\\
\hline 

\multirow{3}{5.5em}{CIFAR-10 (auto-ship)} & RBF-RF & 70.85 & 78.00 & 79.15 & 81.80 & 82.55  \\
& LKRF & 70.55 & 77.90 & 79.55 & 80.95 & 82.85\\
& ours & 81.00 & 83.70 & 84.40 & 86.70 & 87.35\\
\hline 

\multirow{3}{5.5em}{CIFAR-10 (auto-deer)} & RBF-RF & 84.65 & 90.00 & 91.85 & 92.45 & 92.65  \\
& LKRF & 84.85 & 89.75 & 91.05 & 92.65 & 92.90\\
& ours & 92.40 & 93.60 & 93.55 & 94.05 & 94.90\\
\hline 
\hline
\end{tabular}
\end{small}
\end{minipage}
\caption{\textbf{Comparison on binary classification tasks.} Our method is compared against standard RBF random features (RBF-RF), as well as the method from~\cite{sinha2016learning} (LKRF).
\emph{Left:} Performance is measured on the CIFAR-10 automobile vs. truck task, varying the number of features $m$. Standard deviations over 10 trials are shown, demonstrating high stability.}
\label{tab:accuracy}
\end{table}


Furthermore, our theory predicts that the margin keeps improving, regardless of the dimensionality of the feature maps.
Indeed, as our classifier saturates on the training data, test accuracy continues increasing, without overfitting.
This decoupling of generalization from model complexity is characteristic of boosting methods.

In practice, our method is robust with respect to hyperparameter settings. As well, to outperform both RBF-RF and LKRF with 5000 features, our method only needs $\sim\kern-0.3em100$ features. Our GPU implementation reaches this point in $\sim\kern-0.3em30$ seconds. See Appendix~\ref{appendix-section:hyper} for more tuning guidelines.

\section{Conclusion}
We have presented an efficient kernel learning method that uses tools from Fourier analysis and online learning to optimize over two natural infinite families of kernels. With this method, we show meaningful improvements on benchmark tasks, compared to related random features-based methods.
Many theoretical questions remain, such as accelerating the search for Fourier peaks (e.g. \cite{hassanieh2012simple,kapralov2016sparse}).
These, in addition to applying our learned kernels to state-of-the-art methods (e.g. convolutional kernel networks \citep{mairal2014convolutional, mairal2016end}), prove to be exciting directions for future work.

\section*{Acknowledgments}
We are grateful to Sanjeev Arora, Roi Livni, Pravesh Kothari, Holden Lee, Karan Singh, and Naman Agarwal for helpful discussions. This research was supported by the National Science Foundation (NSF), the Office of Naval Research (ONR), and the Simons Foundation. The first two authors are supported in part by Elad Hazan's NSF grant IIS-1523815.

\bibliography{main}
\bibliographystyle{iclr2018_conference}

\newpage
\appendix

\section{Appendix for Experiments}
\label{appendix-section:experiments}

Algorithm~\ref{alg:svm} gives a high-level outline of the essential components of our method. However, it conceals several hyperparameter choices and algorithmic heuristics, which are pertinent when applying our method in practice. We discuss a few more details in this section.

\subsection{Hyperparameter Tuning Guide}
\label{appendix-section:hyper}

Throughout all experiments presented, we use hinge-loss SVM classifiers with $C=1$. Note that the convergence of Algorithm~\ref{alg:svm} depends quadratically on $C$.

With regard to Langevin diffusion (Algorithm~\ref{alg:langevin}), we observe that the best samples arise from using high temperatures and Gaussian parallel initialization.
For the latter, a rule-of-thumb is to initialize 500 parallel copies of Langevin dynamics, drawing the initial position $\{\ome_0\}$ from a centered isotropic Gaussian with $1.5\times$ the variance of the optimal RBF random features. (In turn, a common rule-of-thumb for this bandwidth is the median of pairwise Euclidean distances between data points.)

The step size in Algorithm~\ref{alg:langevin} is tuned based on the magnitude of the gradient on $v(\ome)$, which can be significantly smaller than the upper bound derived in Section~\ref{appendix-section:v-gen}. As is standard practice in Langevin Monte Carlo methods, the temperature is chosen so that the pertubation is roughly at the same magnitude as the gradient step. Empirically, running Langevin dynamics for $\sim$100 steps suffices to locate a reasonably good peak.
To further improve efficiency, one can modify Algorithm~\ref{alg:langevin} to pick the top $k \approx 10$ samples, a $k$-fold speedup which does not degrade the features much.

The step size of online gradient ascent is set to balance between being conservative and promoting diverse samples; these steps should not saturate (thereby solving the dual SVM problem), in order to have the strongest regret bound. In our experiments, we find that the step size achieving the standard regret guarantee (scaling as $1/\sqrt{T}$) tends to be a little too conservative.

On the other hand, it \emph{never} hurts (and seems important in practice) to saturate the peak-finding routine (Algorithm~\ref{alg:langevin}), since this contributes an additive improvement to the margin bound. Noting that the objective is very smooth (the $k$-th derivative scales as $R_x^k$), it may be beneficial to refine the samples using a few steps of gradient descent with a very small learning rate, or an accelerated algorithm for finding approximate local minima of smooth non-convex functions; see, e.g. \cite{agarwal2017finding}.

\subsection{Projection onto the SVM Dual Feasible Set}

A quick note on projection onto the feasible set $\mathcal{K}=\{ 0 \preccurlyeq \al \preccurlyeq C, \y^T \al = 0 \}$ of the SVM dual convex program: it typically suffices in practice to use alternating projection. This feasible set is the intersection of a hyperplane and a hypercube; both of which admit a simple projection step. The alternation projection onto the intersection of two non-empty convex sets was originally proposed by~\cite{von1949rings}. The convergence rate can be shown to be linear. To obtain the dual variables in our experiments, we use $10$ such alternating projections. This results in a dual feasible solution up to hardware precision, and is a negligible component of the total running time (for which the parallel gradient computations are the bottleneck).

\begin{algorithm}
\caption{Alternating Projection for SVM Dual Constraints}
\label{alg:alt-proj}
\begin{algorithmic}[*]

\STATE \emph{Input:} $\al\in\R^n$
\STATE \emph{Parameters:} box constraint $C$, label vector $\y$.
\REPEAT
\STATE Project onto the box: $\al=\mathsf{clip}(\al, 0, C)$
\STATE Project onto the hyperplane: $\al \leftarrow \al - \frac{\y^T \al}{n}\y$
\UNTIL convergence
\RETURN $\al$ 
\end{algorithmic}
\end{algorithm}

\subsection{Sampling Spherical Harmonics}
\label{appendix-section:sph-sample}

As we note in Section~\ref{subsection:langevin}, it is unclear how to define gradient Langevin dynamics on $v(l,m)$ in the inner-product case, since no topology is available on the indices $(l,m)$ of the spherical harmonics. One option is to emulate Langevin dynamics, by constructing a discrete Markov chain which mixes to $\lambda(l,m) \propto e^{\beta v(l,m)}$.

However, we find in our experiments that it suffices to compute $\lambda(l,m)$ by examining \emph{all} values of $v(l,m)$ with $j$ no more than some threshold $J$. One should view this as approximating the kernel-target alignment objective function via Fourier truncation. This is highly parallelizable: it involves approximately $N(m,d)$ degree-$J$ polynomial evaluations on the same sample data, which can be expressed using matrix multiplication.
In our experiments, it sufficed to examine the first $1000$ coefficients; we remark that it is unnatural in any real-world datasets to expect that $v(\ome)$ only has large values outside the threshold $J$.

Under Fourier truncation, the domain of $\lambda$ becomes a finite-dimensional simplex. In the game-theoretic view of \ref{subsection:nash}, an approximate Nash equilibrium becomes concretely achievable via Nesterov's excessive gap technique~\citep{nesterov2005excessive,daskalakis2015near}, given that the kernel player's actions are restricted to a mixed strategy over a finite set of basis kernels.

Finally, we note a significant advantage to this setting, where we have a discrete set of Fourier coefficients:
the same feature might be found multiple times. When a duplicate feature is found, it need not be concatenated to the representation; instead, the existing feature is scaled appropriately. This accounts for the drastically smaller support of features required to achieve near-perfect classification accuracy.

\section{More on Spherical Harmonics}
\label{appendix-section:spharms}

In this section, we go into more detail about the spherical harmonics in $d$ dimensions. Although all of this material is standard in harmonic analysis, we provide this section for convenience, isolating only the relevant facts.

\subsection{Spherical Harmonic Expansion of a Rotation-Invariant Kernel}
\label{appendix-section:gegenproof}

First, we provide a proof sketch for Theorem~\ref{thm:inner_product_sph_harm}.
We rely on the following theorem, an analogue of Bochner's theorem on the sphere,
which characterizes rotation-invariant kernels:
\begin{theorem}[\cite{schoenberg1942positive}]\label{thm:inner_product_legendre} A continuous rotation-invariant function $k(\x,\x')=k(\ang{ \x,\x'})$ on $\Sph^{d-1}\times\Sph^{d-1}$ is positive semi-definite if and only if its expansion into Gegenbauer polynomials $P_i^d$ has only non-negative coefficients, i.e.
\begin{align}\label{eq:gegenbauer_expansion}
k(\ang{\x, \x'})=\sum_{m=0}^\infty \lambda_m P_m^d (\ang{ \x, \x' }) ,
\end{align}
with $\lambda_m \geq 0, ~\forall m\in\mathbb{N}^+.$
\end{theorem}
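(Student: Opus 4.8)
The plan is to prove the two implications separately, using the \emph{addition theorem} for spherical harmonics as the bridge between Gegenbauer polynomials and PSD kernels. Normalize so that $P_m^d(1)=1$; then $|P_m^d(t)|\le 1$ on $[-1,1]$, and the addition theorem reads $P_m^d(\ang{\x,\x'}) = c_{m,d}\sum_{j=1}^{N(d,m)} Y_{m,j}^d(\x)\,\overline{Y_{m,j}^d(\x')}$ for a strictly positive constant $c_{m,d}$, i.e. each Gegenbauer kernel is, up to positive scaling, the Gram kernel of the degree-$m$ spherical-harmonic feature map. The easy direction follows immediately: if $k(\ang{\x,\x'})=\sum_m \lambda_m P_m^d(\ang{\x,\x'})$ with all $\lambda_m\ge 0$, each summand is PSD (a Gram kernel is PSD, and non-negative scaling preserves this), so every partial sum is PSD; since the terms are dominated by $\lambda_m$ and the series represents the continuous $k$, one gets $\sum_m\lambda_m<\infty$, hence uniform convergence, and a uniform (indeed pointwise) limit of PSD kernels is PSD.

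For the converse, assume $k$ is continuous and PSD. First I would upgrade the finite-set PSD condition to an integral inequality: by uniform continuity of $k$ on the compact product sphere, Riemann-sum approximation gives $\iint_{(\Sph^{d-1})^2} f(\x)\,\overline{f(\x')}\,k(\ang{\x,\x'})\,d\sigma(\x)\,d\sigma(\x') \ge 0$ for every continuous, hence every $L^2$, function $f$. Next, expand $k$ on $[-1,1]$ into its Gegenbauer series $k(t)\sim\sum_m\lambda_m P_m^d(t)$, convergent in $L^2$ against the weight $(1-t^2)^{(d-3)/2}$, with $\lambda_m$ the usual real projection coefficients. Testing the integral inequality against $f=Y_{m,j}^d$ and substituting the addition theorem term by term — legitimate because the Gegenbauer partial sums converge in $L^2$ and pairing against a fixed spherical-harmonic block is an $L^2$-continuous functional — the orthonormality of the $Y_{m',j'}^d$ collapses the double integral to $c_{m,d}\,\lambda_m\ge 0$, whence $\lambda_m\ge 0$. (This is exactly the Funk--Hecke fact that each $Y_{m,j}^d$ is an eigenfunction of the integral operator $T_k$ with eigenvalue proportional to $\lambda_m$, so positivity of $T_k$ forces non-negative eigenvalues.)

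It then remains to show the now-non-negative series actually represents $k$. Via Abel summation: for $0\le r<1$ set $k_r(t):=\sum_m\lambda_m r^m P_m^d(t)$, which converges uniformly because the Poisson--Gegenbauer kernel is bounded, and $k_r\to k$ pointwise as $r\to 1^-$ by Abel's theorem together with continuity of $k$; at $t=1$ this says $\sum_m\lambda_m r^m\to k(1)$, so monotone convergence ($\lambda_m\ge0$) yields $\sum_m\lambda_m=k(1)<\infty$. The bound $|P_m^d(t)|\le 1$ then makes $\sum_m\lambda_m P_m^d(t)$ absolutely and uniformly convergent; it agrees with $k$ in $L^2$ and both sides are continuous, so it agrees everywhere.

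The delicate part I expect to be the convergence step: establishing Abel summability of the Gegenbauer expansion of a continuous function, and the requisite boundedness/positivity properties of the Poisson--Gegenbauer kernel. A secondary technical point is the term-by-term interchange of sum and integral in the necessity argument, which must be routed through $L^2$ convergence of the partial sums rather than a naive exchange. Both are classical in harmonic analysis on the sphere, so I would cite standard references (e.g., the spherical-harmonics material assembled in Appendix~\ref{appendix-section:spharms}) for these facts rather than reproving them.
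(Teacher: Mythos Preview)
The paper does not actually prove this theorem: it appears in Appendix~\ref{appendix-section:gegenproof} with attribution to \cite{schoenberg1942positive} and is used as a black box in the proof sketch of Theorem~\ref{thm:inner_product_sph_harm}. There is thus no ``paper's own proof'' to compare against.

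That said, your sketch is essentially the classical argument and is sound at the level of detail you give. Sufficiency via the addition theorem is standard. For necessity, upgrading the finite PSD condition to the integral inequality and then testing against a single $Y_{m,j}^d$ is exactly Schoenberg's route; your parenthetical remark that this is just Funk--Hecke identifying $\lambda_m$ (up to a positive constant) as an eigenvalue of the positive integral operator $T_k$ is the clean way to phrase it, and in fact lets you bypass the term-by-term interchange you worry about. The step you correctly flag as delicate --- Abel/Poisson summability of the Gegenbauer expansion of a continuous function --- is the one genuine analytic input; note that it is not literally Abel's theorem for power series but its orthogonal-polynomial analogue (positivity and approximate-identity behavior of the Poisson--Gegenbauer kernel), so the citation you propose is appropriate. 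Once that yields $\sum_m \lambda_m = k(1) < \infty$, the uniform convergence and identification with $k$ follow as you say.
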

The Gegenbauer polynomials $P_m^d : [-1, 1] \rightarrow \R$ are a generalization of the Legendre and Chebyshev polynomials; they form an orthogonal basis on $[-1, 1]$ under a certain weighting function (see~\cite{stein2016introduction, gallier2009notes, muller2012analysis} for details). Note that we adopt a different notation from~\cite{schoenberg1942positive}, which denotes our $P_m^d$ by $P_m^{(d-1)/2}$.

Unlike Theorem~\ref{thm:bochner}, Theorem~\ref{thm:inner_product_legendre} alone does not provide a clear path for constructing a feature map for inner-product kernel, because the inputs $\x,\x'$ of the kernel are still coupled after the expansion into $P_m^d$, which acts on the inner product $\ang{ \x, \x' }$ instead of on $\x$ and $\x'$ separately. Fortunately, (see, e.g., Proposition 1.18 in \cite{gallier2009notes}), the Gegenbauer polynomials evaluated on an inner product admits a decomposition into spherical harmonics $Y_{\ell,m}^d$:
\begin{align}
P_m^d(\ang{\x,\x'})=\frac{|\Sph^{d-1}|}{N(d,\ell)}\sum_{l=1}^{N(d,\ell)}Y_{\ell,m}^d(\x)\overline{ Y_{\ell,m}^d(\x')} \quad \forall \x,\x'\in\Sph^{d-1},
\end{align}

where $|\Sph^{d-1}|$ denotes the surface area of $\Sph^{d-1}$, and $N(d,\ell)={d-1+\ell \choose \ell}-{d-1+\ell\choose \ell-2}$. Here, we use the normalization convention that $\int_{S^{d-1}} |Y_{\ell,m}^d(\x)|^2 \, \diff\x = 1$. From the existence of this expansion follows the claim from Theorem~\ref{thm:inner_product_sph_harm}. For a detailed reference, see~\cite{muller2012analysis}.

\subsection{Pairing the Spherical Harmonics}
We now specify the involution $\omega \mapsto -\omega$ on indices of spherical harmonics,
which gives a pairing that takes the role of opposite Fourier coefficients in the $\X = \R^d$ case.
In particular, the Fourier transform $\lambda$ of a real-valued function $k : \R^n \rightarrow \R$ satisfies
$\lambda(\ome) = \lambda(-\ome)$, so that the dual measure can be constrained to be symmetric.

Now, consider the $\X = \Sph^{d-1}$ case, where the Fourier coefficients are on the set $\Omega$ of valid indices of spherical harmonics. We would like a permutation on the indices $\sigma$ so that $\sigma(\sigma(\ome)) = \ome$, and $\lambda(\ome) = \lambda(\sigma(\ome))$ whenever $\lambda$ is the spherical harmonic expansion of a real kernel.

For a fixed dimension $d$ and degree $\ell$,
the spherical harmonics form an $N := N(d,\ell)$-dimensional orthogonal basis for a vector space $V$ of
complex polynomials from $\Sph^{d-1}$ to $\C$, namely the $\ell$-homogeneous polynomials (with domain extended to $\R^n$)
which are harmonic (eigenfunctions of the Laplace-Beltrami operator $\Delta_{\Sph^{d-1}}$).
This basis is not unique; an arbitrary choice of basis may not contain pairs of conjugate functions.

Fortunately, such a basis exists constructively, by separating the Laplace-Beltrami operator over spherical coordinates ($\theta_1, \ldots, \theta_{d-1}$). Concretely, for a fixed $d$ and $\ell$, the $d$-dimensional spherical harmonic, indexed by $|\ell_1| \leq \ell_2 \leq \ell_3 \leq \ldots \leq \ell_{d-1}$, is defined as:
\[ Y_{\ell_1, \ldots, \ell_N} \defeq
\frac{1}{\sqrt{2\pi}} e^{i\ell_1\theta_1}
\prod_{k=2}^{d-1} {}_k \overline{P}_{\ell_k}^{\ell_{d-1}}(\theta_k)
,\]
where the functions in the product come from a certain family of
associated Legendre functions in $\sin \theta$. For a detailed treatment, we adopt the construction and conventions from \cite{higuchi1987symmetric}.

In the scope of this paper, the relevant fact is that the desired involution is well-defined in all dimensions. Namely, consider the permutation $\sigma$ that sends a spherical harmonic indexed by $\ome = (d, \ell, \ell_1, \ell_2, \ldots, \ell_{d-1})$ to $\sigma(\ome) = (d, \ell, -\ell_1, \ell_2, \ldots, \ell_{d-1})$. Then, $Y_{-\ome}(\x) \defeq Y_{\sigma(\omega)}(\x) = \overline{ Y_{\omega}(\x) }$ for all $\ome, \x$.

The symmetry condition in Theorem~\ref{thm:inner_product_sph_harm} follows straightforwardly.
By orthonormality, we know that every square-integrable
function $f : \R^n \rightarrow \C$ has a unique decomposition
into spherical harmonics, with coefficients $\lambda_{\omega} = \ang{f, Y_\omega}$, so that $\lambda_{-\omega} = \ang{f,\overline{Y_\omega}} = \overline{ \lambda_{\omega} }$. When $f$ is real-valued, we conclude that $\lambda_{-\omega} = -\lambda_{\omega}$, as claimed.

\section{Proof of the Main Theorem}
\label{appendix-section:nash}

In this section, we prove the main theorem, which quantifies convergence of Algorithm~\ref{alg:svm} to the Nash equilibrium. We restate it here:
\begin{customthm}{\ref{thm:main}}
Assume that during each timestep $t$, the call to Algorithm~\ref{alg:langevin} returns an $\epsilon_t$-approximate global maximizer $\ome_t$ (i.e. $\hat{v}_{\al_t}(\ome_t)\geq\max_{\ome\in\mathcal{K}}\hat{v}_{\al_t}(\ome)-\epsilon_t$).
Then, Algorithm~\ref{alg:svm} returns a dual measure $\bar \lambda$, which satisfies
\[\max_{\al\in\mathcal{K}}F(\al,\bar \lambda)\leq\min_{\lambda\in\Lcal}\max_{\al\in\mathcal{K}}F(\al,\lambda)+\frac{\sum_{t=1}^T\epsilon_t}{2T}+O\pa{ \frac{1}{\sqrt{T}} }.\]
Alternatively with the assumption that at each timestep $t$, $\hat{v}_{\al_t}(\ome_t)\geq\rho$, $\bar \lambda$, satisfies
\[\max_{\al\in\mathcal{K}}F(\al,\bar \lambda)\leq\rho+O\pa{ \frac{1}{\sqrt{T}} }.\]
If Algorithm~\ref{alg:svm} is used on a $l_2$-SVM, the regret bound can be improved to be $O(\frac{\log{T}}{T})$. 
\end{customthm}
\begin{proof}

We will make use of the regret bound of online gradient ascent (see, e.g., \citep{hazan2016introduction}). Here we only prove the theorem in the case for $l_1$-SVM with box constraint $C$, under the assumption of $\eps_t$-approximate optimality of Algorithm~\ref{alg:langevin}. Extending the proof to other cases is straightfoward.
\begin{lemma}[Regret bound for online gradient ascent]
\label{thm:ogd}
Let $D$ be the diameter of the constraint set $\Kcal$, and $G$ a Lipschitz constant for an arbitrary sequence of concave functions $f_t(\al) \defeq F(\al,\lambda_t)$ on $\Kcal$.
Online gradient ascent on $f_t(\al)$, with step size schedule $\eta_t = \frac{G}{D\sqrt{t}}$, guarantees the following for all $T \geq 1$:
\[\frac{1}{T}\sum_{t=1}^Tf_t(\al_t)\geq\max_{\al\in\mathcal{K}}\frac{1}{T}\sum_{t=1}^Tf_t(\al)- \frac{3GD}{ 2\sqrt{T} }.\]
\end{lemma}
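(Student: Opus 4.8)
The plan is to run the textbook potential-function argument for online (sub)gradient ascent, taking as potential the squared Euclidean distance from the current iterate to a fixed comparator $\al^\star \in \mathcal{K}$; at the end we specialize to $\al^\star \in \argmax_{\al \in \mathcal{K}} \sum_{t=1}^T f_t(\al)$, so that the resulting estimate becomes exactly the claimed averaged regret bound. Since each $f_t(\al) = F(\al,\lambda_t)$ is concave (indeed, a concave quadratic in $\al$) and $G$-Lipschitz on $\mathcal{K}$, and $\mathcal{K}$ has diameter $D$, all the ingredients are in place.

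First I would bound the one-step change of the potential. The update is $\al_{t+1} = \mathrm{Proj}_{\mathcal{K}}[\al_t + \eta_t \nabla f_t(\al_t)]$, and projection onto the convex set $\mathcal{K}$ is nonexpansive, so expanding the square gives
\[ \norm{\al_{t+1} - \al^\star}^2 \le \norm{\al_t - \al^\star}^2 + 2\eta_t \ang{\nabla f_t(\al_t),\, \al_t - \al^\star} + \eta_t^2 \norm{\nabla f_t(\al_t)}^2 . \]
Rearranging and using the gradient bound $\norm{\nabla f_t(\al_t)} \le G$ yields
\[ \ang{\nabla f_t(\al_t),\, \al^\star - \al_t} \le \frac{\norm{\al_t - \al^\star}^2 - \norm{\al_{t+1} - \al^\star}^2}{2\eta_t} + \frac{\eta_t G^2}{2} . \]
Concavity of $f_t$ then gives $f_t(\al^\star) - f_t(\al_t) \le \ang{\nabla f_t(\al_t),\, \al^\star - \al_t}$, so summing over $t = 1,\dots,T$ controls the regret $\sum_t\pa{f_t(\al^\star) - f_t(\al_t)}$ by the sum of the two right-hand terms.

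Next I would estimate the two sums separately. For the first, I telescope via Abel summation: with the convention $1/\eta_0 \defeq 0$, and using that $\eta_t$ is nonincreasing together with the diameter bound $\norm{\al_t - \al^\star}^2 \le D^2$, the sum collapses to at most $D^2/(2\eta_T)$. For the second, $\sum_{t=1}^T \eta_t$ is, up to the schedule's constant, the partial sum $\sum_{t=1}^T t^{-1/2} \le 2\sqrt{T}$. Plugging in the prescribed step size — the standard choice $\eta_t \propto D/(G\sqrt{t})$, tuned precisely so the two contributions are of the same order — gives total regret at most $\tfrac32 GD\sqrt{T}$; dividing by $T$ produces the stated bound. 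The $\ell_2$-SVM remark then follows by replacing Lemma~\ref{thm:ogd} with the logarithmic-regret guarantee for online ascent on strongly concave losses.

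There is no genuine obstacle here: this is the Zinkevich bound, and the derivation is routine. The only points meriting a line of care are (i) the appeal to nonexpansiveness of the projection, which is what ensures the update never moves away from a feasible comparator faster than the unprojected step, and (ii) the telescoping with a time-varying step size, where both monotonicity of $\eta_t$ and boundedness of the iterates are used; once these are in hand, the constant $\tfrac32$ falls out of $\sum_t t^{-1/2} \le 2\sqrt{T}$ together with the balancing of the two terms. The slightly more delicate part is really downstream — feeding this lemma into the min-max analysis of Algorithm~\ref{alg:svm}, where one must pair the max-player's regret with the $\epsilon_t$-approximate best responses of the min-player and invoke linearity in $\lambda$ to identify $\frac1T\sum_t f_t$ with $F(\cdot,\bar\lambda)$ — but that step lies outside the statement of Lemma~\ref{thm:ogd} itself.
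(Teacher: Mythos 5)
Your argument is the standard Zinkevich potential-function proof, which is exactly what the paper invokes: the paper does not reprove this lemma, it simply cites the online-gradient-descent regret bound from Hazan's monograph and states the resulting constant $\tfrac{3GD}{2\sqrt{T}}$. Every step you give — nonexpansiveness of the projection, the one-step potential inequality, concavity to convert the linearization into a regret bound, Abel summation with a nonincreasing step size to collapse the telescoping sum to $D^2/(2\eta_T)$, and $\sum_{t\le T} t^{-1/2}\le 2\sqrt T$ — is correct and matches the textbook derivation.

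One thing worth flagging explicitly rather than silently patching: the step-size schedule printed in the lemma, $\eta_t = \tfrac{G}{D\sqrt t}$, is a typo. With that schedule the two contributions become $\tfrac{D^3\sqrt T}{2G}$ and $\tfrac{G^3\sqrt T}{D}$, which are not $O(GD\sqrt T)$ unless $G=D$. The correct (dimensionally balanced) schedule is $\eta_t = \tfrac{D}{G\sqrt t}$, which is what you actually use (``the standard choice $\eta_t \propto D/(G\sqrt t)$''); with it the two terms are $\tfrac{GD\sqrt T}{2}$ and $GD\sqrt T$, giving the claimed $\tfrac{3GD}{2\sqrt T}$ after dividing by $T$. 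You should state this correction rather than leave the reader to reconcile your $D/G$ with the lemma's $G/D$.
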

Here, $D \leq C\sqrt{n}$ by the box constraint, and we have
\begin{align*}
G &\leq \sup_{\substack{\al \in \Kcal,\\\ome \in \R^d}} \norm{ \mathbf{1} - 2 \Y \, \mathrm{Re}(\ang{\Phi_t, \Y\al_t} \overline{\Phi_t})}_2 \\
&\leq (1 + 2C)\sqrt{n}.
\end{align*}
Thus, our regret bound is $\frac{ 3n(C + 2C^2) }{ 2\sqrt{T} }$; that is, for all $T \geq 1$,
\[\frac{1}{T}\sum_{t=1}^T F(\al_t, \delta_{\ome_t} + \delta_{-\ome_t})
\geq
\max_{\al\in\mathcal{K}}\frac{1}{T}\sum_{t=1}^T F(\al, \delta_{\ome_t} + \delta_{-\ome_t})) - \frac{ 3n(C + 2C^2) }{ 2\sqrt{T} }.\]
Since at each timestep $t$, $\hat{v}_{\al_t}(\ome_t)\geq\max_{\ome\in\mathcal{K}}\hat{v}_{\al_t}(\ome)-\epsilon_t$, we have by assumption
\[ F(\al_t, \delta_{\ome_t} + \delta_{-\ome_t}) \leq \min_{\ome\in\mathcal{K}}F(\al_t, \delta_{\ome} + \delta_{-\ome}) + \epsilon_t,\]
and by concavity of $F$ in $\al$, we know that, for any fixed sequence $\al_1, \al_2, \ldots, \al_T$, 
\[ \min_{\lambda\in\mathcal{L}}\max_{\al\in\mathcal{K}} F(\al, \lambda) \geq \min_{\lambda\in\mathcal{L}}\frac{1}{T}\sum_{t=1}^T F(\al_t, \lambda), \]
it follows that
\begin{align*}
\min_{\lambda\in\mathcal{L}}\max_{\al\in\mathcal{K}} F(\al, \lambda) &\geq \min_{\lambda\in\mathcal{L}}\frac{1}{T}\sum_{t=1}^T F(\al_t, \lambda) \\
&\geq \frac{1}{T}\sum_{t=1}^T \min_{\ome\in\mathcal{K}} F\pa{ \al_t, \frac{\delta_{\ome} + \delta_{-\ome}}{2} } \\
&\geq \frac{1}{2} \pa{ \frac{1}{T}\sum_{t=1}^T F(\al_t, \delta_{\ome_t} + \delta_{-\ome_t}) - \frac{\sum_{t=1}^T\epsilon_t}{T} } \\
&\geq \max_{\al\in\mathcal{K}}\frac{1}{2T}\sum_{t=1}^T F(\al, \delta_{\ome_t} + \delta_{-\ome_t}) - \frac{ 3n(C + 2C^2) }{ 4\sqrt{T} } - \frac{\sum_{t=1}^T\epsilon_t}{2T}\\
&= \max_{\al\in\mathcal{K}}\frac{1}{2T}\sum_{t=1}^T F(\al, \delta_{\ome_t} + \delta_{-\ome_t}) - \frac{ 3n(C + 2C^2) }{ 4\sqrt{T} } - \frac{\sum_{t=1}^T\epsilon_t}{2T}.
\end{align*}

To complete the proof, note that for a given $\al\in\mathcal{K}$, $\frac{1}{2T}\sum_{t=1}^T F(\al, \delta_{\ome_t} + \delta_{-\ome_t}) = F(\al, \bar \lambda)$ by linearity of $F$ in the dual measure; here, $\bar \lambda=\frac{1}{2T}\sum_{t=1}^T\delta_{\ome_t} + \delta_{-\ome_t}$ is the approximate Nash equilibrium found by the no-regret learning dynamics.

\end{proof}

\section{Proof of Theorem~\ref{thm:margin}}
\label{appendix-section:rademacher}

In this section, we compute the Rademacher complexity of the composition of the learned kernel and the classifier, proving Theorem~\ref{thm:margin}.
\begin{customthm}{\ref{thm:margin}}

For any SVM decision function $f : \X \rightarrow \R$ with a kernel $k_{\lambda}$ constrained by $\|\lambda\|_1\leq R_\lambda$ trained on samples $S$ drawn i.i.d. from distribution $\D$, the generalization error is bounded by
\[\Pr_{(\x,y) \sim \mathcal{D}}[yf(\x)\leq0]\leq \min_\theta \frac{1}{n}\sum_{i=1}^n \ensuremath{\mathbf{1}_{y_if(x_i)\leq\theta}}+\frac{6R_\ome R_x}{\theta}\sqrt{\frac{R_\lambda}{n}}+3\sqrt{\frac{\log\frac{2}{\delta}}{n}}.\]
\end{customthm}

\begin{proof}
For a fixed sample $S = \{ \x_i \}_{i=1}^n$, we compute the empirical Rademacher complexity of the composite hypothesis class. Below, $\sigma \in \R^n$ is a vector of i.i.d. Rademacher random variables.
\begin{align*}
\hat{\mathcal{R}}_S &= \frac{1}{n}  \E_{\sig}\bra{ \sup_{\substack{\|\lambda\|_1\leq R_\lambda \\ \al^\top G_{k_\lambda}\al \leq 1}}  \sum_{i=1}^n\sum_{j=1}^n \sigma_i \alpha_j k_\lambda(\x_i, \x_j)} \\
&= \frac{1}{n} \E_{\sig}\bra{ \sup_{\|\lambda\|_1\leq R_\lambda} \sup_{\al^\top G_{k_\lambda} \al \leq 1} \sig^\top G_{k_{\lambda}} \al } \\
&= \frac{1}{n} \E_{\sig} \bra{\sup_{\|\lambda\|_1\leq R_\lambda} \sqrt{\sig^\top G_{k_{\lambda}} \sig}}
= \frac{1}{n} \E_{\sig} \bra{\sqrt{ \sup_{\|\lambda\|_1\leq R_\lambda} \sig^\top G_{k_{\lambda}} \sig}} \\
&= \frac{1}{n} \E_{\sig} \bra{ \sqrt{ \sup_{\|\lambda\|_1\leq R_\lambda} \int_{\Omega} \lambda(\ome) \abs{ \sum_{i=1}^n \sigma_i e^{\iota \ang{ \ome, \x_i} } }^2 \diff\ome}} \\
&= \frac{1}{n} \E_{\sig} \bra{ \sqrt{ R_\lambda \sup_{\ome\in\Omega} \abs{ \sum_{i=1}^n \sigma_i e^{\iota \ang{ \ome, \x_i} }}^2 }} \\
&= \frac{\sqrt{R_\lambda}}{n}  \E_{\sig} \bra{ \sup_{\ome\in\Omega}  \sqrt{ \abs{ \sum_{i=1}^n \sigma_i e^{\iota \ang{ \ome, \x_i} } }^2 }} \\
&= \frac{\sqrt{R_\lambda}}{n}  \E_{\sig} \bra{ \sup_{\ome\in\Omega}  \sqrt{ \abs{ \sum_{i=1}^n \sigma_i  \cos(\ang{ \ome, \x_i} ) }^2 + \abs{ \sum_{i=1}^n \sigma_i  \sin(\ang{ \ome, \x_i} ) }^2 } } \\
&\leq \frac{\sqrt{R_\lambda}}{n} \E_{\sig} \bra{ \sup_{\ome\in\Omega} \pa{ \abs{ \sum_{i=1}^n \sigma_i  \cos(\ang{ \ome, \x_i} ) } + \abs{ \sum_{i=1}^n \sigma_i  \sin(\ang{ \ome, \x_i} ) } } } \\
&\leq \frac{\sqrt{R_\lambda}}{n} \E_{\sig} \bra{ \sup_{\ome\in\Omega} \abs{ \sum_{i=1}^n \sigma_i  \cos(\ang{ \ome, \x_i} ) }} + \frac{\sqrt{R_\lambda}}{n} \E_{\sig} \bra{ \sup_{\ome\in\Omega} \abs{ \sum_{i=1}^n \sigma_i  \sin(\ang{ \ome, \x_i} ) } }.
\end{align*}

Note that each term in the formula is the empirical Rademacher complexity of the composition of a 1-Lipschitz function (cosine or sine) with a linear function with $\ell_2$ norm bounded by $R_\omega$. By the composition property of Rademacher complexity, we have the following:

\begin{align*}
&\frac{1}{n} \E_{\sig} \bra{ \sup_{\ome\in\Omega} \abs{\sum_{i=1}^n \sigma_i  \cos(\ang{ \ome, \x_i} ) } } \\
&\leq \frac{1}{n} \E_{\sig} \bra{ \sup_{\ome\in\Omega} \sum_{i=1}^n \sigma_i  \cos(\ang{ \ome, \x_i} ) } + \frac{1}{n} \E_{\sig} \bra{ \sup_{\ome\in\Omega} -\sum_{i=1}^n \sigma_i  \cos(\ang{ \ome, \x_i} ) } \\
&\leq \frac{2R_\ome R_x}{\sqrt{n}},
\end{align*}
and because $\sin(\cdot)$ is an odd function,
\begin{align*}
\frac{1}{n} \E_{\sig} \bra{ \sup_{\ome\in\Omega} \abs{\sum_{i=1}^n \sigma_i  \sin(\ang{ \ome, \x_i} ) }} \leq \frac{R_\ome R_x}{\sqrt{n}}. \\
\end{align*}
Putting everything together, we finally obtain that
\begin{align*}
\hat{\mathcal{R}}_S \leq 3 R_\ome R_x\sqrt{\frac{R_\lambda}{n}},
\end{align*}
from which we apply the result from \cite{koltchinskii2002empirical} to conclude the theorem.
\end{proof}

\section{Sample Complexity for the Fourier Potential}
\label{appendix-section:v-gen}

In this section, we prove the following theorem about concentration of the Fourier potential. It suffices to disregard the reweighting vector $\al$; to recover a guarantee in this case, simply replace $\eps$ with $\eps/C^2$. Note that we only argue this in the translation-invariant case.
\begin{customthm}{\ref{thm:vgen}}
Let $S = \{ (\x_i, y_i) \}_{i=1}^n$ be a set of i.i.d. training samples. Then, if
\[n \geq \frac{8 R_x^2 R_\omega^4 (d \log \frac{2R_\omega}{\eps} + \log \frac{1}{\delta})}{\eps^2},\]
we have that with probability at least $1-\delta$, $\abs{ \frac{1}{n^2} v^{(n)}(\ome) - v_\mathrm{ideal}(\ome) } \leq \eps$ for all $\ome \in \Omega$ such that $\norm{\ome} \leq R_\omega$.
\end{customthm}

Let $\Pbb^{(n)}, \Qbb^{(n)}$ denote the empirical measures from a sample $S$ of size $n$, arising from i.i.d. samples from the true distribution, whose classes have measures $\Pbb, \Qbb$, adopting the convention $\Pbb(\X) + \Qbb(\X) = 1$. Then, in expectation over the sampling, and adopting the same normalization conventions as in the paper, we have $\E[ \Pbb^{(n)} / n ] = \Pbb$ and $\E[ \Qbb^{(n)} / n ] = \Qbb$ for every $n$.

Let $v^{(n)}(\omega)$ denote the empirical Fourier potential, computed from $\Pbb^{(n)}, \Qbb^{(n)}$, so that we have $\E[ v^{(n)}(\omega)/n^2 ] = v_\mathrm{ideal}(\omega)$. The result follows from a concentration bound on $v^{(n)}(\ome)/n^2$. We first show that it is Lipschitz:
\begin{lemma}[Lipschitzness of $v^{(n)}$ in $\ome$]
\label{lem:v-lip-omega}
The function
\begin{align}
v^{(n)}(\ome)=\abs{ \sum_{i}^n y_i e^{\iota \ang{ \ome, \x_i } } }^2
\end{align}
is $2 n^2 R_x$-Lipschitz with respect to $\ome$.
\end{lemma}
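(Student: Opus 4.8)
The plan is to bound the gradient $\nabla_\ome v^{(n)}(\ome)$ and conclude Lipschitzness via the mean value inequality. Write $S(\ome) \defeq \sum_{i=1}^n y_i e^{\iota \ang{\ome, \x_i}} \in \C$, so that $v^{(n)}(\ome) = \abs{S(\ome)}^2 = S(\ome)\overline{S(\ome)}$. First I would compute the (real) gradient of $v^{(n)}$ with respect to $\ome$ by differentiating $\abs{S}^2 = (\mathrm{Re}\,S)^2 + (\mathrm{Im}\,S)^2$. Each partial derivative $\partial_{\omega_k} e^{\iota\ang{\ome,\x_i}} = \iota (\x_i)_k e^{\iota\ang{\ome,\x_i}}$, so $\nabla_\ome S(\ome) = \iota \sum_i y_i \x_i e^{\iota\ang{\ome,\x_i}}$, and a short chain-rule computation gives $\nabla_\ome v^{(n)}(\ome) = 2\,\mathrm{Re}\!\big(\overline{S(\ome)}\,\nabla_\ome S(\ome)\big) = -2\,\mathrm{Im}\!\big(\overline{S(\ome)} \sum_i y_i \x_i e^{\iota\ang{\ome,\x_i}}\big)$.

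Next I would bound the norm of this gradient. Using $\abs{S(\ome)} \leq \sum_i \abs{y_i}\,\abs{e^{\iota\ang{\ome,\x_i}}} = n$ (since $y_i = \pm 1$ and the exponential has unit modulus), and $\norm{\sum_i y_i \x_i e^{\iota\ang{\ome,\x_i}}} \leq \sum_i \norm{\x_i} \leq n R_x$ by the triangle inequality and the definition $R_x = \max_i\norm{\x_i}$, we get
\begin{align*}
\norm{\nabla_\ome v^{(n)}(\ome)} \leq 2\,\abs{S(\ome)}\cdot\Big\|\sum_i y_i \x_i e^{\iota\ang{\ome,\x_i}}\Big\| \leq 2 \cdot n \cdot n R_x = 2 n^2 R_x.
\end{align*}
Since $v^{(n)}$ is smooth with gradient bounded everywhere by $2n^2 R_x$, it is $2n^2 R_x$-Lipschitz with respect to $\ome$ by the standard mean value argument (integrate the gradient along the segment joining any two points).

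I do not expect any genuine obstacle here: the only mild care needed is keeping the complex-to-real bookkeeping straight when differentiating $\abs{S(\ome)}^2$, and making sure the Cauchy–Schwarz/triangle-inequality step that produces the product $\abs{S(\ome)}\cdot\norm{\nabla_\ome S(\ome)}$ is applied with the right vector norm (treating $\overline{S(\ome)}$ as a scalar multiplying the vector $\nabla_\ome S(\ome)$, then taking $\mathrm{Im}$, which only decreases the norm). This lemma then feeds into the $\eps$-net argument over $\{\ome : \norm{\ome}\leq R_\omega\}$ needed to prove Theorem~\ref{thm:vgen}, together with a pointwise concentration bound (e.g. a bounded-differences or Hoeffding-type inequality) for fixed $\ome$.
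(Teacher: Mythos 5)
Your proof is correct and is essentially the same argument as the paper's: bound $\norm{\nabla_\ome v^{(n)}(\ome)}$ by $2n^2 R_x$ and conclude Lipschitzness from the mean value inequality. The only cosmetic difference is that you keep $v^{(n)} = S\overline{S}$ in factored form and apply the chain rule (giving the bound as $2\cdot\abs{S}\cdot\norm{\nabla S}\leq 2\cdot n\cdot nR_x$), whereas the paper first expands $\abs{S}^2$ into the double sum $\sum_{i,j}y_iy_j e^{\iota\ang{\ome,\x_i-\x_j}}$ and bounds term by term (giving $n^2\cdot\max_{i,j}\norm{\x_i-\x_j}\leq n^2\cdot 2R_x$); both yield the same constant via the same mechanism.
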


\begin{proof}
We have
\begin{align*}
\norm{ \nabla_\ome v^{(n)}(\ome) }
&=
\left\lVert \nabla_\ome \pa{ \sum_{i,j \in [n]} y_i y_j e^{\iota \ang{ \ome, \x_i-\x_j } } } \right\rVert \\
&=\left\lVert \sum_{i,j \in [n]} y_i y_j  i(\x_i-\x_j) e^{\iota \ang{ \ome, \x_i-\x_j } } \right\rVert
\leq n^2 \cdot \max_{i,j}||\x_i-\x_j||
\leq 2 n^2 R_x.
\end{align*}
\end{proof}

Thus, the Lipschitz constant of $v^{(n)}(\ome) / n^2$ scales linearly with the norm of the data, a safe assumption. Next, we show Lipschitzness with respect to a single data point:

\begin{lemma}[Lipschitzness of $v^{(n)}$ in $\x_i$]
\label{lem:v-lip-x}
$v^{(n)}(\ome)$ is $2n R_\ome$-Lipschitz with respect to any $\x_i$.
\end{lemma}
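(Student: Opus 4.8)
The plan is to bound the gradient $\nabla_{\x_i} v^{(n)}(\ome)$ uniformly and invoke the mean value inequality, exactly mirroring the proof of Lemma~\ref{lem:v-lip-omega}. First I would write $v^{(n)}(\ome) = \sum_{j,k\in[n]} y_j y_k e^{\iota\ang{\ome,\x_j-\x_k}}$ and differentiate with respect to a single point $\x_i$. Only the terms in the double sum that actually involve $\x_i$ survive: the pairs $(i,k)$ for $k\neq i$ and the pairs $(j,i)$ for $j\neq i$ (the diagonal term $(i,i)$ is the constant $1$ and drops out). So
\begin{align*}
\nabla_{\x_i} v^{(n)}(\ome) = \sum_{k\neq i} y_i y_k\, \iota\ome\, e^{\iota\ang{\ome,\x_i-\x_k}} + \sum_{j\neq i} y_j y_i\, (-\iota\ome)\, e^{\iota\ang{\ome,\x_j-\x_i}}.
\end{align*}

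Next I would take norms. Each of the two sums has at most $n-1 < n$ terms, each of magnitude at most $\norm{\iota\ome} = \norm{\ome} \le R_\ome$ (using $|y_j y_k| = 1$ and $|e^{\iota\langle\cdot,\cdot\rangle}| = 1$), so by the triangle inequality $\norm{\nabla_{\x_i} v^{(n)}(\ome)} \le n R_\ome + n R_\ome = 2 n R_\ome$. Since the domain (a ball of radius $R_x$, or all of $\R^d$) is convex, a uniform gradient bound of $2nR_\ome$ gives that $v^{(n)}$ is $2nR_\ome$-Lipschitz in $\x_i$, which is the claim. I should be slightly careful that $v^{(n)}$ is real-valued even though the summands are complex; this is because the full sum is symmetric under $(j,k)\mapsto(k,j)$ together with conjugation, so pairing conjugate terms the derivative is real — alternatively just note $v^{(n)} = \big(\sum_i y_i\cos\ang{\ome,\x_i}\big)^2 + \big(\sum_i y_i\sin\ang{\ome,\x_i}\big)^2$ and differentiate that real expression, which gives the same bound.

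There is essentially no obstacle here — the only thing to watch is the bookkeeping of which terms in the double sum depend on $\x_i$ and the factor-of-two from the two appearances of $\x_i$; everything else is the triangle inequality. The purpose of this lemma, to be used downstream, is to feed into a McDiarmid/bounded-differences argument: changing one sample $\x_i$ moves $v^{(n)}(\ome)/n^2$ by at most $O(R_\ome/n)$, which combined with the $\ome$-Lipschitzness of Lemma~\ref{lem:v-lip-omega} and a covering-number argument over $\{\ome : \norm{\ome}\le R_\omega\}$ yields the uniform concentration bound of Theorem~\ref{thm:vgen}.
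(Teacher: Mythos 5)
Your proof is correct and follows the same route as the paper: differentiate the double-sum form of $v^{(n)}$ with respect to $\x_i$, observe that only the $O(n)$ terms involving $\x_i$ survive (contributing twice), bound each term by $R_\ome$, and apply the triangle inequality. You are actually more careful than the paper, which overloads $i$ as both a summation index and the fixed data-point index (and as the imaginary unit); your separation of indices and the aside on why the real-valued reformulation gives the same bound are sensible clarifications of the same argument.
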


\begin{proof}
Similarly as above:
\begin{align*}
\norm{ \nabla_{\x_i} v^{(n)} (\ome) }
&=
\left\lVert \nabla_{\x_i} \pa{ \sum_{i,j \in [n]} y_i y_j e^{\iota \ang{ \ome, \x_i-\x_j } } } \right\rVert \\
&\leq 2 \left\lVert \sum_{j \in [n]} y_i y_j \, i \ome e^{\iota \ang{ \ome, \x_i-\x_j } } \right\rVert
\leq 2n R_{\ome}.
\end{align*}
\end{proof}

Now, we complete the proof. By Lemma~\ref{lem:v-lip-x}, replacing one $\x_i$ with another $\x'_i$ (whose norm is also bounded by $R_x$) changes $v^{(n)}(\ome)/n^2$ by at most $4R_xR_\omega / n$. Then, by McDiarmid's inequality \citep{mcdiarmid1989method}, we have the following fact:
\begin{lemma}[Pointwise concentration of $v^{(n)}$]
\label{lem:mcdiarmid}
For all $\ome \in \Omega$ and $\eps > 0$,
\[ \Pr\bra{ \left\lvert \frac{1}{n^2}v^{(n)}(\ome)- v_\mathrm{ideal}(\ome) \right\rvert \geq \eps } \leq \exp\pa{ \frac{-n\eps^2}{8 R_x^2 R_\omega^2} }. \]
\end{lemma}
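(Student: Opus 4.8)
Fix a frequency $\ome$ with $\norm{\ome}\leq R_\omega$ and regard $g(S)\defeq v^{(n)}(\ome)/n^2$ as a deterministic function of the i.i.d.\ sample $S=\{(\x_i,y_i)\}_{i=1}^n$. The plan is the textbook bounded-differences (McDiarmid) recipe in three steps: locate the mean of $g$, bound the effect of resampling a single point, and invoke the inequality. No net or union bound is needed here, since the claim is pointwise in $\ome$.

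First, the centering: using the conventions already set up ($\Pbb(\X)+\Qbb(\X)=1$, $\E_S[\Pbb^{(n)}/n]=\Pbb$, $\E_S[\Qbb^{(n)}/n]=\Qbb$), one reads off $\E_S[g(S)]=\E_S[v^{(n)}(\ome)/n^2]=v_\mathrm{ideal}(\ome)$, exactly the quantity in the statement, so McDiarmid will concentrate $g$ around the correct target with no bias correction. Second, the bounded-differences constant: if one sample $(\x_i,y_i)$ is replaced by an admissible $(\x_i',y_i')$ with $\norm{\x_i'}\leq R_x$, then moving $\x_i\mapsto\x_i'$ changes $v^{(n)}(\ome)$ by at most $2nR_\omega\cdot\norm{\x_i-\x_i'}\leq 4nR_xR_\omega$ by Lemma~\ref{lem:v-lip-x} together with $\norm{\x_i-\x_i'}\leq 2R_x$, while flipping the label in $v^{(n)}(\ome)=\abs{S_i+y_ie^{\iota\ang{\ome,\x_i}}}^2$ (with $S_i\defeq\sum_{j\neq i}y_je^{\iota\ang{\ome,\x_j}}$) changes it by $4\,\mathrm{Re}\!\pa{\overline{S_i}\,y_ie^{\iota\ang{\ome,\x_i}}}$, of magnitude at most $4(n-1)$, which is of the same order and absorbed into constants. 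Dividing by $n^2$, $g$ then has the bounded-differences property with per-coordinate constant $c_i=4R_xR_\omega/n$ in each of its $n$ coordinates, so $\sum_{i=1}^n c_i^2 = 16R_x^2R_\omega^2/n$.

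Third, McDiarmid's inequality gives $\Pr\bra{\,\abs{g(S)-\E_S g(S)}\geq\eps\,}\leq 2\exp\!\pa{-2\eps^2/\tsum_i c_i^2}=2\exp\!\pa{-n\eps^2/(8R_x^2R_\omega^2)}$, and using the one-sided version (or folding the harmless factor $2$ into constants, consistently with the $O(\cdot)$ sample-complexity statement that follows) yields the stated bound. I expect no real obstacle: the only point demanding genuine care is propagating the $1/n^2$ normalization through the Lipschitz estimate of Lemma~\ref{lem:v-lip-x} so that the single-coordinate sensitivity comes out $\Theta(1/n)$ — large enough to be meaningful, small enough for McDiarmid to bite — and, if one wants the constant precisely as written, checking that the label-flip contribution is indeed dominated (up to constants) by the $\x_i$-perturbation term.
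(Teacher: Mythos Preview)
Your proposal is correct and follows essentially the same route as the paper: bound the single-coordinate sensitivity of $v^{(n)}(\ome)/n^2$ by $4R_xR_\omega/n$ via Lemma~\ref{lem:v-lip-x}, then apply McDiarmid. If anything you are more careful than the paper, which does not separately treat the label-flip case and silently drops the leading factor of $2$ from the two-sided McDiarmid bound; your remarks on both points are apt.
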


Now, let $W$ be an $(\eps / R_\omega)$-net of the set $\{\ome \in \Omega: \norm{\ome} \leq R_\omega\}$.

Applying the union bound on Lemma~\ref{lem:mcdiarmid} with $\ome$ ranging over each element of $W$, we have
\begin{align*}
\Pr\bra{ \left\lvert \frac{1}{n^2}v^{(n)}(\ome)- v_\mathrm{ideal}(\ome) \right\rvert \geq \eps, \; \forall \ome \in R_\omega } &\leq
\Pr\bra{ \left\lvert \frac{1}{n^2}v^{(n)}(\ome)- v_\mathrm{ideal}(\ome) \right\rvert \geq \frac{\eps}{R_\omega}, \; \forall \ome \in W } \\
\leq |W| \cdot \exp\pa{ \frac{-n\eps^2}{8 R_x^2 R_\omega^4} }
&\leq \pa{\frac{2R_\omega}{\eps}}^d \exp\pa{ \frac{-n\eps^2}{8 R_x^2 R_\omega^4} }.
\end{align*}
To make the LHS smaller than $\delta$, it suffices to choose
\[n \geq \frac{8 R_x^2 R_\omega^4 (d \log \frac{2R_\omega}{\eps} + \log \frac{1}{\delta})}{\eps^2},\]
which is the claimed sample complexity. \qed

\end{document}